 \newtheorem{Theorem}{Theorem}
 \newtheorem{Proposition}{Proposition}
  \DeclareMathOperator*{\argmax}{\arg\!\max}
\begin{document}
\title{Context-Aware Generative Adversarial Privacy}
\author{
	Chong Huang\footnote{C. Huang and L. Sankar are with the School of Electrical, Computer, and Energy Engineering at Arizona State University, Tempe, AZ} \footnotemark[3], Peter Kairouz\footnote{P. Kairouz, X. Chen, and R. Rajagopal are with the Department of Electrical Engineering at Stanford University, Stanford, CA} \footnote{Equal contributions}, Xiao Chen\footnotemark[2], Lalitha Sankar\footnotemark[1], and Ram Rajagopal\footnotemark[2]
}
\date{}

\maketitle
\pagenumbering{arabic}
\begin{abstract}
Preserving the utility of published datasets while simultaneously providing provable privacy guarantees is a well-known challenge. On the one hand, context-free privacy solutions, such as differential privacy, provide strong privacy guarantees, but often lead to a significant reduction in utility. On the other hand, context-aware privacy solutions, such as information theoretic privacy, achieve an improved privacy-utility tradeoff, but assume that the data holder has access to dataset statistics. We circumvent these limitations by introducing a novel context-aware privacy framework called generative adversarial privacy (GAP).
GAP leverages recent advancements in generative adversarial networks (GANs) to allow the data holder to learn privatization schemes from the dataset itself. Under GAP, learning the privacy mechanism is formulated as a constrained minimax game between two players: a privatizer that sanitizes the dataset in a way that limits the risk of inference attacks on the individuals' private variables, and an adversary that tries to infer the private variables from the sanitized dataset. To evaluate GAP's performance, we investigate two simple (yet canonical) statistical dataset models: (a) the binary data model, and (b) the binary Gaussian mixture model. For both models, we derive game-theoretically optimal minimax privacy mechanisms, and show that the privacy mechanisms learned from data (in a generative adversarial fashion) match the theoretically optimal ones. This demonstrates that our framework can be easily applied in practice, even in the absence of dataset statistics.

\textbf{Keywords-} {Generative Adversarial Privacy; Generative Adversarial Networks; Privatizer Network; Adversarial Network; Statistical Data Privacy; Differential Privacy; Information Theoretic Privacy; Mutual Information Privacy; Error Probability Games; Machine Learning}
\end{abstract}

\section{Introduction}

The explosion of information collection across a variety of electronic platforms is enabling the use of \textit{inferential machine learning} (ML) and artificial intelligence to guide consumers through a myriad of choices and decisions in their daily lives. In this era of artificial intelligence, data is quickly becoming the most valuable resource \citep{economist2017}. Indeed, large scale datasets provide tremendous \textit{utility} in helping researchers design state-of-the-art machine learning algorithms that can learn from and make predictions on real life data. Scholars and researchers are increasingly demanding access to larger datasets that allow them to learn more sophisticated models. Unfortunately, more often than not, in addition to containing \textit{public} information that can be published, large scale datasets also contain \textit{private} information about participating individuals (see Figure \ref{fig:database}). Thus, data collection and curation organizations are reluctant to release such datasets before carefully \textit{sanitizing} them, especially in light of recent public policies on data sharing \citep{NPRS2016,EUGDPR2017}.

\begin{figure}
	\centering
	\includegraphics[width=0.8\textwidth]{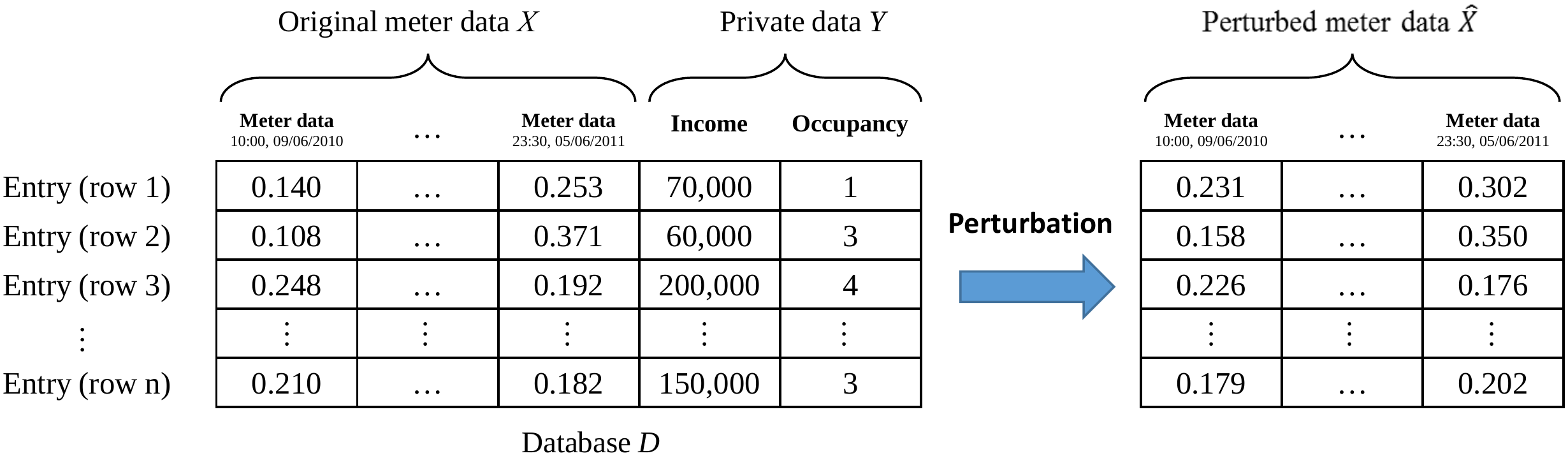}
	\caption{An example privacy preserving mechanism for smart meter data}
	\label{fig:database}
\end{figure}

To protect the privacy of individuals, datasets are typically anonymized before their release. This is done by stripping off personally identifiable information (e.g., first and last name, social security number, IDs, etc.) \citep{samarati1998protecting,sweeney2002k,li2007t}. Anonymization, however, does not provide immunity against correlation and linkage attacks \citep{narayanan2008robust, Harmanci2016}. Indeed, several successful attempts to re-identify individuals from anonymized datasets have been reported in the past ten years. For instance, \citep{narayanan2008robust} were able to successfully de-anonymize watch histories in the Netflix Prize, a public recommender system competition. In a more recent attack, \citep{Sweeney2013} showed that participants of an anonymized DNA study were identified by linking their DNA data with the publicly available Personal Genome Project dataset. Even more recently, \citep{Finn2015} successfully designed re-identification attacks on anonymized fMRI imaging datasets. Other annoymization techniques, such as generalization \cite{lefevre2005incognito,bayardo2005data,fung2007anonymizing} and suppression \cite{iyengar2002transforming,samarati2001protecting,wang2007handicapping}, also cannot prevent an adversary from performing the sensitive linkages or recover private information from published datasets \cite{fung2010privacy}.

Addressing the shortcomings of anonymization techniques requires data randomization. In recent years, two randomization-based approaches with provable \textit{statistical privacy} guarantees have emerged: (a) context-free approaches that assume worst-case dataset statistics and adversaries; (b) context-aware approaches that explicitly model the dataset statistics and adversary's capabilities.

{\bf Context-free privacy.}
One of the most popular context-free notions of privacy is \textit{differential privacy} (DP) \citep{Dwork_DP, Dwork_DP_Survey,Dwork2014}. DP, quantified by a leakage parameter $\epsilon$\footnote{Smaller $\epsilon \in[0,\infty)$ implies smaller leakage and stronger privacy guarantees.}, restricts distinguishability between \textit{any} two ``neighboring'' datasets from the published data. DP provides strong, context-free theoretical guarantees against worst-case adversaries. However, training machine learning models on randomized data with DP guarantees often leads to a significantly reduced utility
and comes with a tremendous hit in sample complexity \citep{Fienberg2010,WangLeeKifer2015,UhleropSlavkovicFienberg13,YuFienbergSlavkovicUhler2014,KarwaSlavkovic2016,Duchi2013a,duchi2013local,DuchiWainwrightJordan2016,kairouz2016JMLR,Kairouz2016a,YeBarg2017,Raval2017,Hayes2017} in the desired leakage regimes. For example, learning population level histograms under local DP suffers from a stupendous increase in sample complexity by a factor proportional to the size of the dictionary \citep{Kairouz2016a,duchi2013local,kairouz2016JMLR}.

{\bf Context-aware privacy.}
Context-aware privacy notions have been so far studied by information theorists
under the rubric of \textit{information theoretic} (IT) privacy
 \citep{Yamamoto1983,Rebollo-Monedero2010,Varodayan2011,Sankar2011,PinCalmon2012,Sankar_TIFS_2013,CalmonVMCDT13,SankarSmartMeter2013,Salamatian2015,Liao2016,calmon2014allerton,Asoodeh2014,Calmon2015a,Basciftci2016,Kalantari2016a,Kalantari2017,Kalantari2017a,Asoodeh2015,Asoodeh2016,Asoodeh2016a,Moraffah2017}. IT privacy has predominantly been quantified by mutual information (MI) which models how well an adversary, with access to the released data, can refine its belief about the private features of the data. Recently, Issa \textit{et al.} introduced \textit{maximal leakage} (MaxL) to quantify leakage to a strong adversary capable of guessing any function of the dataset \citep{IssaKW16}. They also showed that their adversarial model can be generalized to encompass local DP (wherein the mechanism ensures limited distinction for \textit{any} pair of entries\textemdash a stronger DP notion without a neighborhood constraint \citep{warner1965randomized,duchi2013local})~\citep{Issa2017}. When one restricts the adversary to guessing specific private features (and not all functions of these features), the resulting adversary is a maximum \textit{a posteriori} (MAP) adversary that has been studied by Asoodeh \textit{et al.} in \citep{Asoodeh2016,Asoodeh2016a,Asoodeh2017arxiv,Asoodeh2017}.  Context-aware data perturbation techniques have also been studied in privacy preserving cloud computing \citep{kerschbaum2015frequency,dong2016secure,dong2014prada}.

Compared to context-free privacy notions, context-aware privacy notions achieve a better privacy-utility tradeoff by incorporating the statistics of the dataset and placing reasonable restrictions on the capabilities of the adversary. However, using information theoretic quantities (such as MI) as privacy metrics requires learning the parameters of the privatization mechanism in a data-driven fashion that involves minimizing an empirical information theoretic loss function. This task is remarkably challenging in practice \citep{Alemi2017,Giraldo2013arxiv,ZhangOzay2017arxiv,Theis2017,Moon2017}.

{\bf Generative adversarial privacy.}
Given the challenges of existing privacy approaches, we take a fundamentally new approach towards enabling private data publishing with guarantees on both privacy and utility. Instead of adopting worst-case, context-free notions of data privacy (such as differential privacy), we introduce a novel context-aware model of privacy that allows the designer to cleverly add noise where it matters. An inherent challenge in taking a context-aware privacy approach is that it requires having access to priors, such as joint distributions of public and private variables. Such information is hardly ever present in practice. To overcome this issue, we take a \textit{data-driven approach} to context-aware privacy. We leverage recent advancements in generative adversarial networks (GANs) to introduce a unified framework for context-aware privacy called \textit{generative adversarial privacy} (GAP). Under GAP, the parameters of a generative model, representing the privatization mechanism, are learned from the data itself.




\subsection{Our Contributions}
We investigate a setting where a data holder would like to publish a dataset $\mathcal{D}$ in a privacy preserving fashion. Each row in $\mathcal{D}$ contains both private variables (represented by $Y$) and public variables (represented by $X$). The goal of the data holder is to generate $\hat{X}$ in a way such that: (a) $\hat{X}$ is as good of a representation of $X$ as possible, and (b) an adversary cannot use $\hat{X}$ to reliably infer $Y$. To this end, we present GAP, a unified framework for context-aware privacy that includes existing information-theoretic privacy notions. Our formulation is inspired by GANs \cite{mirza2014conditional,schmidhuber1992learning,goodfellow2014generative} and error probability games \cite{morris1980single, shamai1992worst, morris1981single, morris1990random, root1961communications}. It includes two learning blocks: a \textit{privatizer}, whose task is to output a sanitized version of the public variables (subject to some distortion constraints); and an \textit{adversary}, whose task is to learn the private variables from the sanitized data. The privatizer and adversary achieve their goals by competing in a constrained minimax, zero-sum game. On the one hand, the privatizer (a conditional generative model) is designed to minimize the adversary's performance in inferring $Y$ reliably. On the other hand, the adversary (a classifier) seeks to find the best inference strategy that maximizes its performance. This generative adversarial framework is represented in Figure \ref{fig:GAP}.

\tikzstyle{block} = [draw, fill=white!20, rectangle,
    minimum height=3em, minimum width=6em]
\tikzstyle{sum} = [draw, fill=white!20, circle, node distance=1cm]
\tikzstyle{input} = [coordinate]
\tikzstyle{output} = [coordinate]
\tikzstyle{pinstyle} = [pin edge={to-,thin,black}]

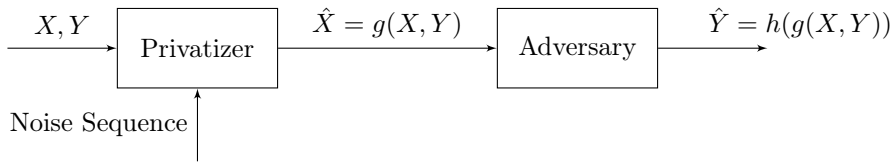
\begin{figure}
\begin{center}
\begin{tikzpicture}[auto, node distance=2.5cm,>=latex']
	\node [input, name=input] {};
	\node [block, right of=input] (generator) {Privatizer};
	\node [output, right of=generator] (output_generator) {};
	\node[block, right of=output_generator] (discriminator) {Adversary};
	\node [input, name=image, below of=generator, node distance = 1.5cm] {};
	\node [output, right of=discriminator] (output_discriminator) {};
	 \draw [draw,->] (input) -- node {$X, Y$} (generator);
         \draw [draw,->] (generator) -- node {$\hat{X} =g(X,Y)$} (discriminator);
        \draw [->] (discriminator) -- node[pos=1.3] {$\hat{Y} = h(g(X,Y))$} (output_discriminator);
        \draw [->] (image) -- node {Noise Sequence} (generator);
\end{tikzpicture}
\end{center}
\caption{Generative Adversarial Privacy}\label{fig:GAP}
\end{figure}

At the core of GAP is a {loss} function\footnote{We quantify the adversary's performance via a {loss} function and the quality of the released data via a distortion function.} that captures how well an adversary does in terms of inferring the private variables. Different {loss} functions lead to different adversarial models. We focus our attention on two types of {loss functions: (a) a $0$-$1$ loss that leads to a \textit{maximum a posteriori probability} (MAP) adversary, and (b) an empirical \textit{log-loss} that leads to a \textit{minimum cross-entropy} adversary}. Ultimately, our goal is to show that our data-driven approach can provide privacy guarantees against a MAP adversary. However, derivatives of a $0$-$1$ {loss} function are ill-defined. To overcome this issue, the ML community uses the more analytically tractable {log-loss} function. We do the same by choosing the {log-loss} function as the adversary's {loss} function in the data-driven framework. We show that it leads to a performance that matches the performance of game-theoretically optimal mechanisms under a MAP adversary. We also show that GAP recovers mutual information privacy when a {$\log$-loss} function is used (see Section \ref{sec:lossfunctions}).

To showcase the power of our context-aware, data-driven framework, we investigate two simple, albeit canonical, statistical dataset models: (a) the binary data model,  and (b) the binary Gaussian mixture model. Under the binary data model, both $X$ and $Y$ are binary. Under the binary Gaussian mixture model, $Y$ is binary whereas $X$ is conditionally Gaussian. For both models, we derive and compare the performance of game-theoretically optimal privatization mechanisms with those that are directly learned from data (in a generative adversarial fashion).

For the above-mentioned statistical dataset models, we present two approaches towards designing privacy mechanisms: (i) private-data dependent (PDD) mechanisms, where the privatizer uses both the public and private variables, and (ii) private-data independent (PDI) mechanisms, where the privatizer only uses the public variables. We show that the PDD mechanisms lead to a superior privacy-utility tradeoff.

\subsection{Related Work}
In practice, a context-free notion of privacy (such as DP) is desirable because it places no restrictions on the dataset statistics or adversary's strength. This explains why DP has been remarkably successful in the past ten years, and has been deployed in array of systems, including Google's Chrome browser \citep{erlingsson2014rappor} and Apple's iOS \citep{apple1}. Nevertheless, because of its strong context-free nature, DP has suffered from a sequence of impossibility results. These results have made the deployment of DP with a reasonable leakage parameter practically impossible. Indeed, it was recently reported that Apple's DP implementation suffers from several limitations \textemdash most notable of which is Apple's use of unacceptably large leakage parameters \citep{tang2017privacy}.

Context-aware privacy notions can exploit the structure and statistics of the dataset to design mechanisms matched to both the data and adversarial models. In this context, information-theoretic metrics for privacy are naturally well suited. In fact, the adversarial model determines the appropriate information metric: an estimating adversary that minimizes mean square error is captured by $\chi^2$-squared measures \citep{CalmonVMCDT13}, a belief refining adversary is captured by MI \citep{Sankar_TIFS_2013}, an adversary that can make a hard MAP decision for a specific set of private features is captured by the Arimoto MI of order $\infty$ \citep{Asoodeh2017arxiv,Asoodeh2017}, and an adversary that can guess any function of the private features is captured by the maximal (over all distributions of the dataset for a fixed support) Sibson information of order $\infty$ \citep{IssaKW16,Issa2017}.

Information-theoretic metrics, and in particular MI privacy, allow the use of Fano's inequality and its variants \citep{alphaMI_verdu} to bound the rate of learning the private variables for a variety of learning metrics, such as error probability and minimum mean-squared error (MMSE). Despite the strength of MI in providing statistical utility as well as capturing a fairly strong adversary that involves refining beliefs, in the absence of priors on the dataset, using MI as an empirical loss function leads to computationally intractable procedures when learning the optimal parameters of the privatization mechanism from data. Indeed, training algorithms with empirical information theoretic loss functions is a challenging problem that has been explored in specific learning contexts, such as determining randomized encoders for the information bottleneck problem  \citep{Alemi2017} and designing deep auto-encoders using a rate-distortion paradigm \citep{Giraldo2013arxiv,ZhangOzay2017arxiv,Theis2017}. Even in these specific contexts, variational approaches were taken to minimize/maximize a surrogate function instead of minimizing/maximizing an empirical mutual information loss function directly \citep{Sugiyama2013}. In an effort to bridge theory and practice, we present a general data-driven framework to design privacy mechanisms that can capture a range of information-theoretic privacy metrics as loss functions. We will show how our framework leads to very practical (generative adversarial) data-driven formulations that match their corresponding theoretical formulations.


In the context of publishing datasets with privacy and utility guarantees, a number of similar approaches have been recently considered. We briefly review them and clarify how our work is different. In \citep{Xu2017}, the authors consider linear privatizer and adversary models by adding noise in directions that are orthogonal to the public features in the hope that the ``spaces'' of the public and private features are orthogonal (or nearly orthogonal). This allows the privatizer to achieve full privacy without sacrificing utility. However, this work is restrictive in the sense that it requires the public and private features to be nearly orthogonal. Furthermore, this work provides no rigorous quantification of privacy and only investigates a limited class of linear adversaries and privatizers.

DP-based obfuscators for data publishing have been considered in \cite{hamm2016minimax, LiuDEEPProtectArxiv2017}. The author in \cite{hamm2016minimax} considers a deterministic, compressive mapping of the input data with differentially private noise added either before or after the mapping. The mapping rule is determined by a data-driven methodology to design minimax filters that allow non-malicious entities to learn some public features from the filtered data, while preventing malicious entities from learning other private features. The approach in \cite{LiuDEEPProtectArxiv2017} relies on using deep auto-encoders to determine the relevant feature space to add differentially private noise to, eliminating the need to add noise to the original data. After noise adding, the original signal is reconstructed. These novel approaches leverage minimax filters and deep auto-encoders to incorporate a notion of context-aware privacy and achieve better privacy-utility tradeoffs while using DP to enforce privacy. However, DP will still incur an insurmountable utility cost since it assumes worst-case dataset statistics. Our approach captures a broader class of randomization-based mechanisms via a generative model which allows the privatizer to tailor the noise to the statistics of the dataset.

Our work is also closely related to adversarial neural cryptography \cite{abadi2016learning}, learning censored representations \cite{edwards2015censoring}, and privacy preserving image sharing \citep{Raval2017}, in which adversarial learning is used to learn how to protect communications by encryption or hide/remove sensitive information. Similar to these problems, our model includes a minimax formulation and uses adversarial neural networks to learn privatization schemes. However, in \cite{edwards2015censoring, Raval2017}, the authors use non-generative auto-encoders to remove sensitive information, which do not have an obvious generative interpretation. Instead, we use a GANs-like approach to learn privatization schemes that prevent an adversary from inferring the private data. Moreover, these papers consider a Lagrangian formulation for the utility-privacy tradeoff that the obfuscator computes. We go beyond these works by studying a game-theoretic setting with constrained optimization, which provides a specific privacy guarantee for a fixed distortion. We also compare the performance of the privatization schemes learned in an adversarial fashion with the game-theoretically optimal ones.

We use conditional generative models to represent privatization schemes. Generative models have recently received a lot of attention in the machine learning community \citep{smolensky1986information, hinton2009deep, schmidhuber1992learning, goodfellow2014generative,mirza2014conditional}. Ultimately, deep generative models hold the promise of discovering and efficiently internalizing the statistics of the target signal to be generated. State-of-the-art generative models are trained in an adversarial fashion \citep{goodfellow2014generative,mirza2014conditional}: the generated signal is fed into a discriminator which attempts to distinguish whether the data is real (i.e., sampled from the true underlying distribution) or synthetic (i.e., generated from a low dimensional noise sequence). Training generative models in an adversarial fashion has proven to be successful in computer vision and enabled several exciting applications. Analogous to how the generator is trained in GANs, we train the privatizer in an adversarial fashion by making it compete with an attacker.

\subsection{Outline}
\label{subsec:outline}

The remainder of our paper is organized as follows. We formally present our GAP model in Section \ref{sec:model}. We also show how, as a special case, it can recover several information theoretic notions of privacy. We then study a simple (but canonical) binary dataset model in Section \ref{sec:binary}. In particular, we present theoretically optimal PDD and PDI privatization schemes, and show how these schemes can be learned from data using a generative adversarial network. In Section \ref{sec:gaussian}, we investigate binary Gaussian mixture dataset models, and provide a variety of privatization schemes. We comment on their theoretical performance and show how their parameters can be learned from data in a generative adversarial fashion. Our proofs are deferred to sections \ref{binaryproof}, \ref{gaussianscheme0proof}, and \ref{gaussianscheme1proof} of the Appendix. We conclude our paper in Section \ref{sec:conclusion} with a few remarks and interesting extensions.

\section{Generative Adversarial Privacy Model}
\label{sec:model}
We consider a dataset $\mathcal{D}$ which contains both public and private variables for $n$ individuals (see Figure~\ref{fig:database}). We represent the public variables by a random variable $X\in\mathcal{X}$, and the private variables (which are typically correlated with the public variables) by a random variable $Y\in\mathcal{Y}$. Each dataset entry contains a pair of public and private variables denoted by $(X,Y)$. Instances of $X$ and $Y$ are denoted by $x$ and $y$, respectively. We assume that each entry pair $(X,Y)$ is distributed according to $P(X,Y)$, and is independent from other entry pairs in the dataset. Since the dataset entries are independent of each other, we restrict our attention to memoryless mechanisms: privacy mechanisms that are applied on each data entry separately. Formally, we define the privacy mechanism as a randomized mapping given by
$$
g(X, Y): \mathcal{X}\times\mathcal{Y}\rightarrow \mathcal{X}.
$$

We consider two different types of privatization schemes: (a) private data dependent (PDD) schemes, and (b) private data independent (PDI) schemes. A privatization mechanism is PDD if its output is dependent on both $Y$ and $X$. It is PDI if its output only depends on $X$. PDD mechanisms are naturally superior to PDI mechanisms. We show, in sections \ref{sec:binary} and \ref{sec:gaussian}, that there is a sizeable gap in performance between these two approaches.

In our proposed GAP framework, the privatizer is pitted against an adversary. We model the interactions between the privatizer and the adversary as a non-cooperative game. For a fixed $g$, the goal of the adversary is to reliably infer $Y$ from $g(X, Y)$ using a strategy $h$. For a fixed adversarial strategy $h$, the goal of the privatizer is to design $g$ in a way that minimizes the adversary's capability of inferring the private variable from the perturbed data. The optimal privacy mechanism is obtained as an equilibrium point at which both the privatizer and the adversary can not improve their strategies by unilaterally deviating from the equilibrium point.

\subsection{Formulation}
Given the output $\hat{X}=g(X,Y)$ of a privacy mechanism $g(X,Y)$, we define $\hat{Y}=h(g(X,Y))$ to be the adversary's inference of the private variable $Y$ from $\hat{X}$. To quantify the effect of adversarial inference,  for a given public-private pair $(x,y)$, we model the {loss} of the adversary as
$$\ell(h(g(X=x,Y=y)),Y=y): \mathcal{Y}\times\mathcal{Y}\rightarrow \mathbb{R}.$$
Therefore, the {expected loss} of the adversary with respect to (\textit{w.r.t.}) $X$ and $Y$ is defined to be
\begin{align}
\label{eq:systemadversaryloss}
L(h, g)\triangleq \mathbb{E}[\ell(h(g(X,Y)),Y)],
\end{align}
where the expectation is taken over $P(X,Y)$ and the randomness in $g$ and $h$.

Intuitively, the privatizer would like to minimize the adversary's ability to learn $Y$ reliably from the published data. This can be trivially done by releasing an $\hat{X}$ independent of $X$. However, such an approach provides no utility for data analysts who want to learn non-private variables from $\hat{X}$. To overcome this issue, we capture the loss incurred by privatizing the original data via a distortion function 
$d(\hat{x},x): \mathcal{X}\times\mathcal{X}\rightarrow \mathbb{R}$, which measures how far the original data $X = x$ is from the privatized data $\hat{X} = \hat{x}$. Thus, the average distortion under $g(X,Y)$ is $\mathbb{E}[d( g(X,Y),X)]$, where the expectation is taken over $P(X,Y)$ and the randomness in $g$.

On the one hand, the data holder would like to find a privacy mechanism $g$ that is both privacy preserving (in the sense that it is difficult for the adversary to learn $Y$ from $\hat{X}$) and utility preserving (in the sense that it does not distort the original data too much). On the other hand, for a fixed choice of privacy mechanism $g$, the adversary would like to find a (potentially randomized) function $h$ that {minimizes its expected loss, which is equivalent to maximizing the negative of the expected loss}. To achieve these two opposing goals, we model the problem as a constrained minimax game between the privatizer and the adversary:
\begin{align}
\label{eq:generalopt}
\min_{g(\cdot)}\max_{h(\cdot)} \quad & -L(h,g) \\ \nonumber
s.t.  \quad & \mathbb{E}[d( g(X,Y),X)]\le D,
\end{align}
where the constant $D\ge0$ determines the allowable distortion for the privatizer and the expectation is taken over $P(X,Y)$ and the randomness in $g$ and $h$.

\subsection{GAP under Various Loss Functions}
\label{sec:lossfunctions}
The above formulation places no restrictions on the adversary. Indeed, different loss functions and decision rules lead to different adversarial models. In what follows, we will discuss a variety of loss functions under hard and soft decision rules, and show how our GAP framework can recover several popular information theoretic privacy notions.

\textbf{Hard Decision Rules. } When the adversary adopts a hard decision rule, $h(g(X, Y))$ is an estimate of $Y$. Under this setting, we can choose $\ell(h(g(X, Y)), Y)$ in a variety of ways. For instance, if $Y$ is continuous, the adversary can attempt to minimize the difference between the estimated and true private variable values. This can be achieved by considering a squared loss function
\begin{align}
\ell(h(g(X, Y)),Y)= (h(g(X, Y)) - Y)^2,
\end{align}
which is known as the $\ell_2$ loss. In this case, one can verify that the adversary's optimal decision rule is $h^* = \mathbb{E}[Y|g(X, Y)]$, which is the conditional mean of $Y$ given $g(X,Y)$. Furthermore, under the adversary's optimal decision rule, the minimax problem in \eqref{eq:generalopt} simplifies to
$$\min_{g(\cdot)} - \text{mmse}(Y|g(X,Y)) = - \max_{g(\cdot)} \text{mmse}(Y|g(X, Y)),$$
subject to the distortion constraint. Here $\text{mmse}(Y|g(X,Y))$ is the resulting minimum mean square error (MMSE) under $h^* = \mathbb{E}[Y|g(X,Y)]$. Thus, under the $\ell_2$ loss, GAP provides privacy guarantees against an MMSE adversary. On the other hand, when $Y$ is discrete (e.g., age, gender, political affiliation, etc), the adversary can attempt to maximize its classification accuracy. This is achieved by considering a $0$-$1$ loss function \citep{nguyen2013algorithms} given by
\begin{align}
\label{eq:0-1loss}
\ell(h(g(X,Y)),Y)=
\left\{
\begin{array}{ll}
0  & \mbox{if } h(g(X,Y))=Y \\
1  & \mbox{otherwise }
\end{array}
\right..
\end{align}
In this case, one can verify that the adversary's optimal decision rule is the \textit{maximum a posteriori probability} (MAP) decision rule: $h^* = \argmax_{y \in \mathcal{Y}} P(y|g(X,Y))$, with ties broken uniformly at random. Moreover, under the MAP decision rule, the minimax problem in \eqref{eq:generalopt} reduces to
\begin{align}
\label{eq:mapadversaryaccuracy}
 \min\limits_{g(\cdot)} - (1 - \max\limits_{y \in \mathcal{Y}} P(y, g(X,Y))) = \min\limits _{g(\cdot)} \max_{y \in \mathcal{Y}} P(y, g(X,Y))  - 1,
\end{align}
subject to the distortion constraint. Thus, under a $0$-$1$ loss function, the GAP formulation provides privacy guarantees against a MAP adversary. 

\textbf{Soft Decision Rules. } Instead of a \textit{hard decision} rule, we can also consider a broader class of \textit{soft decision} rules where $h(g(X,Y))$ is a distribution over $\mathcal{Y}$; i.e., $h(g(X,Y)) = P_h(y|g(X,Y))$ for $y \in \mathcal{Y}$. In this context, we can analyze the performance under a  $\log$-loss
\begin{align}
\label{eq:infologloss}
\ell(h(g(X,Y)),y) = \log \frac{1}{P_h(y|g(X,Y))}.
\end{align}
In this case, the objective of the adversary simplifies to
$$ \max\limits_{h(\cdot)}  -\mathbb{E}[\log \frac{1}{P_h(y|g(X,Y))}] = - H(Y|g(X,Y)),$$
and that the maximization is attained at $P^*_h(y|g(X,Y)) = P(y|g(X,Y))$. Therefore, the optimal adversarial decision rule is determined by the true conditional distribution $P(y|g(X,Y))$, which we assume is known to the data holder in the game-theoretic setting. Thus, under the $\log$-loss function, the minimax optimization problem in \eqref{eq:generalopt} reduces to
$$ \min_{g(\cdot)}  - H(Y|g(X,Y)) = \min_{g(\cdot)} I(g(X,Y);Y) - H(Y),$$
subject to the distortion constraint. Thus, under the $\log$-loss in \eqref{eq:infologloss}, GAP is equivalent to using MI as the privacy metric \citep{PinCalmon2012}.

The $0$-$1$ loss captures a strong guessing adversary; in contrast, log-loss or information-loss models a belief refining adversary. Next, we consider a more general $\alpha$-loss function \cite{LiaoReport2017} that allows continuous interpolation between these extremes via
\begin{align}
\label{poly_loss}
\ell(h(g(X,Y)),y) = \frac{\alpha}{\alpha  - 1} \left( 1- P_h(y|g(X,Y))^{1 - \frac{1}{\alpha}}\right),
\end{align}
for any $\alpha > 1$. As shown in \cite{LiaoReport2017}, for very large $\alpha$ ($\alpha \to \infty$), this loss approaches that of the $0$-$1$ (MAP) adversary. As $\alpha$ decreases, the convexity of the loss function encourages the estimator $\hat{Y}$ to be probabilistic, as it increasingly rewards correct inferences of lesser and lesser likely outcomes (in contrast to a hard decision rule by a MAP adversary of the most likely outcome) conditioned on the revealed data. As $\alpha\to 1$, \eqref{poly_loss} yields the logarithmic loss, and the optimal belief $P_{\hat{Y}}$ is simply the posterior belief.
Denoting $H^{\text{a}}_{\alpha}(Y|g(Y,X))$ as the Arimoto conditional entropy of order $\alpha$, one can verify that \cite{LiaoReport2017}
$$ \max_{h(\cdot)}  -\mathbb{E}\bigg[\frac{\alpha}{\alpha  - 1} \left( 1- P_h(y|g(X,Y))^{1 - \frac{1}{\alpha}}\right)\bigg] = - H^{\text{a}}_{\alpha}(Y|g(X,Y)),$$ which is achieved by a `$\alpha$-tilted' conditional distribution \cite{LiaoReport2017}
$$P^*_h(y|g(X,Y)) = \frac{P(y|g(X,Y))^\alpha}{\sum\limits_{y\in\mathcal{Y}} P(y|g(X,Y))^\alpha}.$$
Under this choice of a decision rule, the objective of the minimax optimization in \eqref{eq:generalopt} reduces to
\begin{equation}
\label{alpha_minmax}
\min\limits_{g(\cdot)}  - H^{\text{a}}_{\alpha}(Y|g(X,Y)) = \min\limits_{g(\cdot)} I^{\text{a}}_{\alpha}(g(X,Y);Y) - H_\alpha(Y),
\end{equation}
where $I^{\text{a}}_{\alpha}$ is the Arimoto mutual information and $H_\alpha$ is the R\'enyi entropy. Note that as $\alpha \to 1$, we recover the classical MI privacy setting and when $\alpha \to \infty$, we recover the $0$-$1$ loss.

\subsection{Data-driven GAP}
\label{sec:learningadversary}
So far, we have focused on a setting where the data holder has access to $P(X,Y)$. When $P(X,Y)$ is known, the data holder can simply solve the constrained minimax optimization problem in \eqref{eq:generalopt} (theoretical version of GAP) to obtain a privatization mechanism that would perform best against a chosen type of adversary. In the absence of $P(X,Y)$, we propose a data-driven version of GAP that allows the data holder to learn privatization mechanisms directly from a dataset of the form $\mathcal{D} = \{(x_{(i)},y_{(i)})\}_{i = 1}^{n}$. Under the data-driven version of GAP, we represent the privacy mechanism via a conditional generative model $g(X,Y;\theta_{p})$ parameterized by $\theta_{p}$. This generative model takes $(X,Y)$ as inputs and outputs $\hat{X}$. In the training phase, the data holder learns the optimal parameters $\theta_p$ by competing against a \textit{computational adversary}: a classifier modeled by a neural network $h(g(X,Y;\theta_{p}); \theta_a)$ parameterized by $\theta_{a}$. After convergence, we evaluate the performance of the learned $g(X,Y;\theta_{p}^{*})$ by computing the maximal probability of inferring $Y$ under the MAP adversary studied in the theoretical version of GAP.

We note that in theory, the functions $h$ and $g$ can (in general) be arbitrary; i.e., they can capture all possible learning algorithms. However, in practice, we need to restrict them to a rich hypothesis class. Figure~\ref{fig:ANN} shows an example of the GAP model in which the privatizer and adversary are modeled as multi-layer ``randomized'' neural networks. For a fixed $h$ and $g$, we quantify the adversary's {\textit{empirical loss}} using a continuous and differentiable function
\begin{equation}
\label{eq:lossemp}
L_{\text{EMP}}(\theta_p,\theta_a)= \frac{1}{n}\sum\limits_{i=1}^{n}\ell(h(g(x_{(i)},y_{(i)};\theta_{p});\theta_a),y_{(i)}),
\end{equation}
where $(x_{(i)}, y_{(i)})$ is the $i^{th}$ row of $\mathcal{D}$ and $\ell(h(g(x_{(i)},y_{(i)};\theta_{p});\theta_a),y_{(i)})$ is the adversary loss in the data-driven context. The optimal parameters for the privatizer and adversary are the solution to
\begin{align}
	\label{eq:learnedprivatizer}
	\min_{\theta_{p}}\max\limits_{\theta_a}\quad & -L_{\text{EMP}}(\theta_p,\theta_a) \\ \nonumber
	s.t.  \quad&  \mathbb{E}_{\mathcal{D}}[d(g(X,Y;\theta_{p}), X)]\le D,
\end{align}
where the expectation is taken over the dataset $\mathcal{D}$ and the randomness in $g$.

In keeping with the now common practice in machine learning, in the data-driven approach for GAP, one can use the empirical $\log$-loss function \citep{zhang2000neural, tang2013deep} given by \eqref{eq:lossemp} with
$$\ell(h(g(x_{(i)},y_{(i)};\theta_{p});\theta_a),y_{(i)})= -y_{(i)}\log h(g(x_{(i)},y_{(i)}; \theta_{p});\theta_{a})-(1-y_{(i)})\log(1-h(g(x_{(i)},y_{(i)}; \theta_{p});\theta_{a})),$$
which leads to a minimum cross-entropy adversary. As a result, the empirical loss of the adversary is quantified by the cross-entropy
\begin{equation}
\label{eq:lossCEbinary}
L_{\text{XE}}(\theta_p,\theta_a)= -\frac{1}{n}\sum\limits_{i=1}^{n}y_{(i)}\log h(g(x_{(i)},y_{(i)}; \theta_{p});\theta_{a})+(1-y_{(i)})\log(1-h(g(x_{(i)},y_{(i)}; \theta_{p});\theta_{a})).
\end{equation}


An alternative loss that can be readily used in this setting is the $\alpha$-loss introduced in Section \ref{sec:lossfunctions}. In the data-driven context, the $\alpha$-loss can be written as
\begin{multline}\label{alfaloss}
\ell(h(g(x_{(i)}, y_{(i)}; \theta_p); \theta_a), y_{(i)}) = \frac{\alpha}{\alpha  - 1} \left(y_{(i)} (1 - h(g(x_{(i)},y_{(i)}; \theta_{p});\theta_{a})^{1 - \frac{1}{\alpha}}) \right.  \\ \left. + (1 - y_{(i)})( 1- (1-h(g(x_{(i)},y_{(i)}; \theta_{p});\theta_{a}))^{1 - \frac{1}{\alpha}}) \right),
\end{multline}
for any constant $\alpha>1$. As discussed in Section \ref{sec:lossfunctions}, the $\alpha$-loss captures a variety of adversarial models and recovers both the $\log$-loss (when $\alpha \to 1$) and $0$-$1$ loss (when $\alpha \to \infty$). Futhermore, \eqref{alfaloss} suggests that $\alpha$-leakage can be used as a surrogate (and smoother) loss function for the $0$-$1$ loss (when $\alpha$ is relatively large).

\begin{figure}
	\centering
	\includegraphics[width=0.6\textwidth]{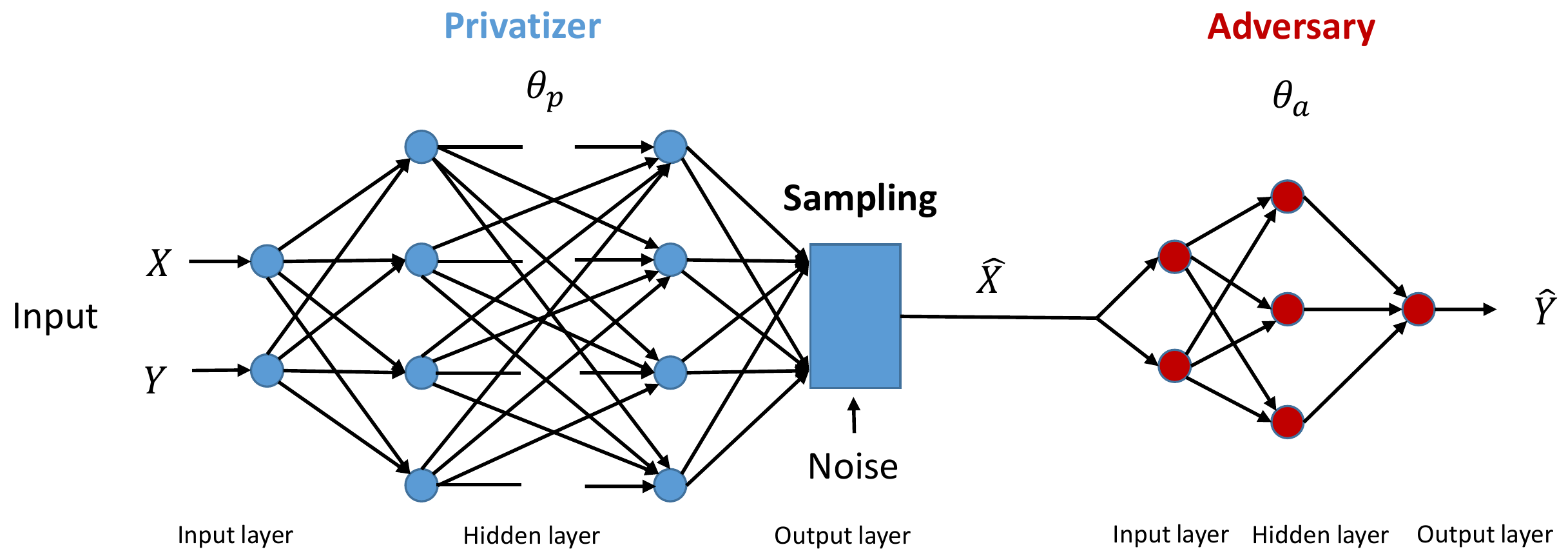}
	\caption{A multi-layer neural network model for the privatizer and adversary}
	\label{fig:ANN}
\end{figure}

The minimax optimization problem in \eqref{eq:learnedprivatizer} is a two-player non-cooperative game between the privatizer and the adversary. The strategies of the privatizer and adversary are given by $\theta_{p}$ and $\theta_{a}$, respectively. Each player chooses the strategy that optimizes its objective function \textit{w.r.t.}~what its opponent does. In particular, the privatizer must expect that if it chooses $\theta_{p}$, the adversary will choose a $\theta_{a}$ that {maximizes the negative of its own loss function} based on the choice of the privatizer. The optimal privacy mechanism is given by the equilibrium of the privatizer-adversary game.

In practice, we can learn the equilibrium of the game using an iterative algorithm presented in Algorithm \ref{alg:euclid}. We first {maximize the negative of the adversary's loss function} in the inner loop to compute the parameters of $h$ for a fixed $g$. Then, we minimize the privatizer's {loss function, which is modeled as the negative of the adversary's loss function,} to compute the parameters of $g$ for a fixed $h$. To avoid over-fitting and ensure convergence, we alternate between training the adversary for $k$ epochs and training the privatizer for one epoch. This results in the adversary moving towards its optimal solution for small perturbations of the privatizer \cite{goodfellow2014generative}.
\begin{algorithm}[ht]
	\caption{Alternating minimax privacy preserving algorithm}\label{alg:euclid}
	\begin{algorithmic}
		\State \textit{Input:} dataset $\mathcal{D}$, distortion parameter $D$, iteration number $T$\\
		\State \textit{Output:} Optimal privatizer parameter $\theta_{p}$\\
		\Procedure{Alernate Minimax}{$\mathcal{D}, D, T$}\\
		\State Initialize $\theta^1_{p}$ and $\theta^1_{a}$\\
		\For{$t=1,...,T$}\\
		\State Random minibatch of $M$ datapoints $\{x_{(1)},...,x_{(M)}\}$ drawn from full dataset\\
		\State Generate $\{\hat{x}_{(1)},...,\hat{x}_{(M)}\}$ via $\hat{x}_{(i)}=g(x_{(i)},y_{(i)};\theta^t_{p})$ \\
		\State Update the adversary parameter $\theta^{t+1}_a$ by stochastic gradient ascend for $k$ epochs
		{$$\quad \theta^{t+1}_a=\theta^{t}_a+\alpha_t\nabla_{\theta_a} \frac{1}{M}\sum\limits_{i=1}^{M}-\ell(h(\hat{x}_{(i)};\theta_{a}),y_{(i)}), \quad\alpha_t>0$$}
		\State Compute the descent direction $\nabla_{\theta_{p}} l(\theta_{p}, \theta^{t+1}_a)$, where
		{$$
		\nonumber
		\ell(\theta_{p},\theta^{t+1}_a)= -\frac{1}{M}\sum\limits_{i=1}^{M}\ell(h(g(x_{(i)},y_{(i)};\theta_{p});\theta^{t+1}_a),y_{(i)})
		$$}
		\quad\qquad subject to $ \frac{1}{M}\sum_{i=1}^{M}[d(g(x_{(i)},y_{(i)};\theta_{p}), x_{(i)})]\le D$\\
		\State Perform line search along $ \nabla_{\theta_{p}} l(\theta_{p}, \theta^{t+1}_a)$ and update
		$$\theta^{t+1}_{p}= \theta^{t}_{p}-\alpha_t \nabla_{\theta_{p}} \ell(\theta_{p}, \theta^{t+1}_a)$$
		\State Exit if solution converged\\
		\EndFor\\
		\State \textbf{return} $\theta^{t+1}_p$\\
		\EndProcedure\\
	\end{algorithmic}
\end{algorithm}

To incorporate the distortion constraint into the learning algorithm, we use the \textit{penalty method} \cite{lillo1993solving} and \textit{augmented Lagrangian method} \cite{eckstein2012augmented} to replace the constrained optimization problem by a series of unconstrained problems whose solutions asymptotically converge to the solution of the constrained problem. Under the penalty method, the unconstrained optimization problem is formed by adding a penalty to the objective function. The added penalty consists of a penalty parameter $\rho_t$ multiplied by a measure of violation of the constraint. The measure of violation is non-zero when the constraint is violated and is zero if the constraint is not violated. Therefore, in Algorithm \ref{alg:euclid}, the constrained optimization problem of the privatizer can be approximated by a series of unconstrained optimization problems with the loss function
{\begin{align}
\label{eq:penaltymethod}
\ell(\theta_{p},\theta^{t+1}_a)= &-\frac{1}{M}\sum\limits_{i=1}^{M}\ell(h(g(x_{(i)},y_{(i)};\theta_{p});\theta^{t+1}_a),y_{(i)})\\\nonumber&+\rho_t\max\{0, \frac{1}{M}\sum\limits_{i=1}^{M}d(g(x_{(i)},y_{(i)};\theta_{p}), x_{(i)})-D\},
\end{align}}where $\rho_t$ is a penalty coefficient which increases with the number of iterations $t$. For convex optimization problems, the solution to the series of unconstrained problems will eventually converge to the solution of the original constrained problem \cite{lillo1993solving}.

The augmented Lagrangian method is another approach to enforce equality constraints by penalizing the objective function whenever the constraints are not satisfied. Different from the penalty method, the augmented Lagrangian method combines the use of a Lagrange multiplier and a quadratic penalty term. Note that this method is designed for equality constraints. Therefore, we introduce a slack variable $\delta$ to convert the inequality distortion constraint into an equality constraint. Using the augmented Lagrangian method, the constrained optimization problem of the privatizer can be replaced by a series of unconstrained problems with the loss function given by
{\begin{align}
\label{eq:augmentedlagrange}
\ell(\theta_{p},\theta^{t+1}_a, \delta) = & -\frac{1}{M}\sum\limits_{i=1}^{M}\ell(h(g(x_{(i)},y_{(i)};\theta_{p});\theta^{t+1}_a),y_{(i)})\\\nonumber & +\frac{\rho_t}{2}(\frac{1}{M}\sum\limits_{i=1}^{M}d(g(x_{(i)},y_{(i)};\theta_{p}), x_{(i)})+\delta-D)^2\\\nonumber
& -\lambda_t (\frac{1}{M}\sum\limits_{i=1}^{M}d(g(x_{(i)},y_{(i)};\theta_{p}), x_{(i)})+\delta-D),
\end{align}}where $\rho_t$ is a penalty coefficient which increases with the number of iterations $t$ and $\lambda_t$ is updated according to the rule $\lambda_{t+1}=\lambda_t-\rho_t(\frac{1}{M}\sum\limits_{i=1}^{M}d(g(x_{(i)},y_{(i)};\theta_{p}), x_{(i)})+\delta-D)$. For convex optimization problems, the solution to the series of unconstrained problems formulated by the augmented Lagrangian method also converges to the solution of the original constrained problem \cite{eckstein2012augmented}.

\subsection{Our Focus}
Our GAP framework is very general and can be used to capture many notions of privacy via various decision rules and loss funcitons. In the rest of this paper, we investigate GAP under $0$-$1$ loss for two simple dataset models: (a) the binary data model (Section \ref{sec:binary}),  and (b) the binary Gaussian mixture model (Section \ref{sec:gaussian}). Under the binary data model, both $X$ and $Y$ are binary. Under the binary Gaussian mixture model, $Y$ is binary whereas $X$ is conditionally Gaussian. We use these results to validate that the data-driven version of GAP can discover ``theoretically optimal'' privatization schemes.

In the data-driven approach of GAP, since $P(X,Y$) is typically unknown in practice and our objective is to learn privatization schemes directly from data, we have to consider the empirical (data-driven) version of \eqref{eq:mapadversaryaccuracy}. Such an approach immediately hits a roadblock because taking derivatives of a {$0$-$1$ loss} function \textit{w.r.t.}~the parameters of $h$ and $g$ is ill-defined. To circumvent this issue, similar to the common practice in the ML literature, we use the empirical $\log$-loss (see Equation \eqref{eq:lossCEbinary}) as the loss function for the adversary. We derive game-theoretically optimal mechanisms for the $0$-$1$ loss function, and use them as a benchmark against which we compare the performance of the data-driven GAP mechanisms.

\section{Binary Data Model}
\label{sec:binary}
In this section, we study a setting where both the public and private variables are binary valued random variables. Let $p_{i,j}$ denote the joint probability of $(X, Y)=(i,j), $ where $i,j\in\{0,1\}$.
To prevent an adversary from correctly inferring the private variable $Y$ from the public variable $X$, the privatizer applies a randomized mechanism on $X$ to generate the privatized data $\hat{X}$. Since both the original and privatized public variables are binary, the distortion between $x$ and $\hat{x}$ can be quantified by the Hamming distortion; i.e. $d(\hat{x},x)=1$ if $\hat{x}\neq x$  and $d(\hat{x}, x)=0$ if $\hat{x}= x$. Thus, the expected distortion is given by $\mathbb{E}[d(\hat{X},X)]=P(\hat{X}\neq X)$.

\subsection{Theoretical Approach for Binary Data Model}
The adversary's objective is to correctly guess $Y$ from $\hat{X}$. We consider a MAP adversary who has access to the joint distribution of $(X,Y)$ and the privacy mechanism. The privatizer's goal is to privatize $X$ in a way that minimizes the adversary's probability of correctly inferring $Y$ from $\hat{X}$ subject to the distortion constraint. We first focus on private-data dependent (PDD) privacy mechanisms that depend on both $Y$ and $X$. We later consider private-data independent (PDI) privacy mechanisms that only depend on $X$. 

\subsubsection{PDD Privacy Mechanism}
Let $g(X,Y)$ denote a PDD mechanism. Since $X$, $Y$, and $\hat{X}$ are binary random variables, the mechanism $g(X,Y)$ can be represented by the conditional distribution $P(\hat{X}|X, Y)$ that maps the public and private variable pair $(X,Y)$ to an output $\hat{X}$ given by
$$P(\hat{X}=0|X=0,Y=0)=s_{0,0},\quad P(\hat{X}=0|X=0, Y=1)=s_{0,1},$$
$$P(\hat{X}=1|X=1,Y=0)=s_{1,0},\quad P(\hat{X}=1|X=1, Y=1)=s_{1,1}.$$
Thus, the marginal distribution of $\hat{X}$ is given by $$P(\hat{X}=0)=\sum\limits_{X,Y}P(\hat{X}=0|X, Y)P(X,Y)=s_{0,0}p_{0,0}+s_{0,1}p_{0,1}+(1-s_{1,0})p_{1,0}+(1-s_{1,1})p_{1,1},$$ $$P(\hat{X}=1)=\sum\limits_{X,Y}P(\hat{X}=1|X,Y)P(X,Y)=(1-s_{0,0})p_{0,0}+(1-s_{0,1})p_{0,1}+s_{1,0}p_{1,0}+s_{1,1}p_{1,1}.$$
If $\hat{X}=0$, the {adversary's inference accuracy} for guessing $\hat{Y}=1$ is
\begin{align}
P(Y=1, \hat{X}=0)=\sum\limits_{X}P(X, Y=1)P(\hat{X}=0|X, Y=1)=p_{1,1}(1-s_{1,1})
+p_{0,1}s_{0,1},
\end{align}
and the {inference accuracy for guessing} $\hat{Y}=0$ is
\begin{align}
P(Y=0, \hat{X}=0)=\sum\limits_{X}P(X, Y=0)P(\hat{X}=0|X, Y=0)=p_{1,0}(1-s_{1,0})+p_{0,0}s_{0,0}.
\end{align}
Let $\mathbf{s}=\{s_{0,0},s_{0,1},s_{1,0},s_{1,1}\}$. For $\hat{X}=0$, the MAP adversary's inference accuracy is given by
\begin{align}
\label{eq:LZ0xy}
{P^{\text{(B)}}_{\text{d}}(\mathbf{s},\hat{X}=0)}&=\max\{P(Y=1, \hat{X}=0),P(Y=0, \hat{X}=0)\}.
\end{align}
Similarly, if $\hat{X}=1$, the MAP adversary's inference accuracy is given by
\begin{align}
\label{eq:LZ1xy}
{P^{\text{(B)}}_{\text{d}}(\mathbf{s},\hat{X}=1)}& =\max\{P(Y=1, \hat{X}=1),P(Y=0, \hat{X}=1)\},
\end{align} where
\begin{align}
& P(Y=1, \hat{X}=1)=\sum\limits_{X}P(X, Y=1)P(\hat{X}=1|X, Y=1)=p_{1,1}s_{1,1}
+p_{0,1}(1-s_{0,1}),\\\nonumber
&P(Y=0, \hat{X}=1)=\sum\limits_{X}P(X, Y=0)P(\hat{X}=1|X, Y=0)=p_{1,0}s_{1,0}+p_{0,0}(1-s_{0,0}).
\end{align}
As a result, for a fixed privacy mechanism $\mathbf{s}$, the MAP adversary's inference accuracy can be written as $${P^{\text{(B)}}_{\text{d}}=\max\limits_{h(\cdot)}P(h(g(X,Y))=Y)= P^{\text{(B)}}_{\text{d}}(\mathbf{s},\hat{X}=0)+P^{\text{(B)}}_{\text{d}}(\mathbf{s},\hat{X}=1)}.$$ Thus, the optimal PDD privacy mechanism is determined by solving
\begin{align}
\label{eq:toyprivatizerlossxy}
\min_{\mathbf{s}}\quad & {P^{\text{(B)}}_{\text{d}}(\mathbf{s},\hat{X}=0)+P^{\text{(B)}}_{\text{d}}(\mathbf{s},\hat{X}=1)}\\\nonumber
s.t.  \quad & P(\hat{X}=0,X=1)+P(\hat{X}=1,X=0)\le D\\\nonumber
& \mathbf{s}\in[0,1]^4.
\end{align}

Notice that the above constrained optimization problem is a four dimensional optimization problem parameterized by $\mathbf{p}=\{p_{0,0},p_{0,1},p_{1,0},p_{1,1}\}$ and $D$. Interestingly, we can formulate \eqref{eq:toyprivatizerlossxy} as a linear program (LP) given by
\begin{align}
\label{eq:toyprivatizerlossxylp}
\min\limits_{s_{1,1},s_{0,1},s_{1,0},s_{0,0}, t_0, t_1}\quad& t_0+t_1\\\nonumber
s.t.\quad\qquad  & 0\le s_{1,1},s_{0,1},s_{1,0},s_{0,0}\le 1 \\\nonumber
& p_{1,1}(1-s_{1,1})+p_{0,1}s_{0,1}\le t_0\\\nonumber
& p_{1,0}(1-s_{1,0})+p_{0,0}s_{0,0}\le t_0\\\nonumber
& p_{1,1}s_{1,1}+p_{0,1}(1-s_{0,1})\le t_1\\\nonumber
& p_{1,0}s_{1,0}+p_{0,0}(1-s_{0,0})\le t_1\\\nonumber	
&  p_{1,1}(1-s_{1,1})+p_{0,1}(1-s_{0,1})+p_{1,0}(1-s_{1,0})+p_{0,0}(1-s_{0,0})\le D,
\end{align}
where $t_0$ and $t_1$ are two slack variables representing the maxima in \eqref{eq:LZ0xy} and \eqref{eq:LZ1xy}, respectively. The optimal mechanism can be obtained by numerically solving \eqref{eq:toyprivatizerlossxylp} using any off-the-shelf LP solver.
\subsubsection{PDI Privacy Mechanism}
\label{sec:PDIP}
In the previous section, we considered PDD privacy mechanisms. Although we were able to formulate the problem as a linear program with four variables, determining a closed form solution for such a highly parameterized problem is not analytically tractable. Thus, we now consider the simple (yet meaningful) class of PDI privacy mechanisms. Under PDI privacy mechanisms, the Markov chain $Y\rightarrow X\rightarrow \hat{X}$ holds. As a result, $P(Y,\hat{X}=\hat{x})$ can be written as
\begin{align}
P(Y,\hat{X}=\hat{x})&=\sum\limits_{X}P(Y,\hat{X}=\hat{x}|X)P(X)\\
&=\sum\limits_{X}P(Y|X)P(\hat{X}=\hat{x}|X)P(X)\\
&=\sum\limits_{X}P(Y,X)P(\hat{X}=\hat{x}|X),
\end{align}
where the second equality is due to the conditional independence property of the Markov chain $Y\rightarrow X\rightarrow \hat{X}$.

For the PDI mechanisms, the privacy mechanism $g(X, Y)$ can be represented by the conditional distribution $P(\hat{X}|X)$. To make the problem more tractable, we focus on a slightly simpler setting in which $Y=X\oplus N$, where $N\in \{0,1\}$ is a random variable independent of $X$ and follows a Bernoulli distribution with parameter $q$. In this setting, the joint distribution of $(X,Y)$ can be computed as
\begin{align}
P(X=1,Y=1)=P(Y=1|X=1)P(X=1)& =p(1-q),\\
P(X=0,Y=1)=P(Y=1|X=0)P(X=0)& =(1-p)q,\\
P(X=1,Y=0)=P(Y=0|X=1)P(X=1)& =pq,\\
P(X=0,Y=0)=P(Y=0|X=0)P(X=0)& =(1-p)(1-q).
\end{align}

Let $\mathbf{s}=\{s_0,s_1\}$ in which $s_0=P(\hat{X}=0|X=0)$ and $s_1=P(\hat{X}=1|X=1)$. The joint distribution of $(Y,\hat{X})$ is given by
\begin{align}
\nonumber
P(Y=1, \hat{X}=0)=p(1-q)(1-s_1)
+(1-p)qs_0, \\\nonumber P(Y=0, \hat{X}=0)=pq(1-s_1)+(1-p)(1-q)s_0,\\\nonumber P(Y=1, \hat{X}=1)=p(1-q)s_1+(1-p)q(1-s_0), \\\nonumber P(Y=0, \hat{X}=1)=pqs_1+(1-p)(1-q)(1-s_0).
\end{align}
Using the above joint probabilities, for a fixed $\mathbf{s}$, we can write the {MAP adversary's inference accuracy as
\begin{align}
\label{eq:mappdi}
 P^{\text{(B)}}_{\text{d}}=\max_{h(\cdot)}P(h(g(X,Y))=Y)& = \max\{P(Y=1, \hat{X}=0),P(Y=0, \hat{X}=0)\}\\\nonumber
 & +\max\{P(Y=1, \hat{X}=1),P(Y=0, \hat{X}=1)\}.
\end{align}
Therefore, the optimal PDI privacy mechanism is given by the solution to
\begin{align}
\label{eq:optprivatizer}
\min\limits_{\mathbf{s}}\quad & P^{\text{(B)}}_{\text{d}}\\\nonumber
s.t.  \quad &  P(\hat{X}=0,X=1)+P(\hat{X}=1,X=0)\le D\\\nonumber
& \mathbf{s}\in[0,1]^2,
\end{align}
where the distortion in \eqref{eq:optprivatizer} is given by $(1-s_0)(1-p)+(1-s_1)p$. By \eqref{eq:mappdi}, $P^{\text{(B)}}_{\text{d}}$} can be considered as a sum of two functions, where each function is a maximum of two linear functions. Therefore, it is convex in $s_0$ and $s_1$ for different values of $p,q$ and $D$.


\begin{Theorem}
	\label{thm:binary}
	For fixed $p,q$ and $D$, there exists infinitely many PDI privacy mechanisms that achieve the optimal privacy-utility tradeoff. If $q=\frac{1}{2}$, any privacy mechanism that satisfies $\{s_0,s_1| ps_1+(1-p)s_0\ge 1-D, s_0, s_1\in[0,1]\}$ is optimal. If $q\neq\frac{1}{2}$, the optimal PDI privacy mechanism is given as follows:
	\begin{itemize}
		\item If $1-D >\max\{p,1-p\}$, the optimal privacy mechanism is given by $\{s_0,s_1| ps_1+(1-p)s_0=1-D, s_0, s_1\in[0,1]\}$. The adversary's accuracy of correctly guessing the private variable is
		\begin{align}
		\left\{
		\begin{array}{ll}
		(1-2q)(1-D)+q  & \mbox{if } q<\frac{1}{2} \\
		(2q-1)(1-D)+1-q & \mbox{if } q>\frac{1}{2}
		\end{array}
		\right..
		\end{align}
		\item Otherwise, the optimal privacy mechanism is given by
		$\{s_0,s_1| \max\{\min\{p,1-p\},1-D\} \le ps_1+(1-p)s_0\le\max\{p,1-p\}, s_0,s_1\in[0,1]\}$ and the adversary's accuracy of correctly guessing the private variable is
		\begin{align}
		\left\{
		\begin{array}{ll}
		p(1-q)+(1-p)q & \mbox{if } p\ge\frac{1}{2}, q<\frac{1}{2} \mbox{ or } p\le\frac{1}{2}, q>\frac{1}{2}\\
		pq+(1-p)(1-q) & \mbox{if } p\ge\frac{1}{2}, q>\frac{1}{2} \mbox{ or } p\le\frac{1}{2}, q<\frac{1}{2}
		\end{array}
		\right..
		\end{align}
	\end{itemize}
\end{Theorem}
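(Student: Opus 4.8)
The plan is to solve the convex program \eqref{eq:optprivatizer} directly by exploiting the structure noted in the text: the objective $P^{\text{(B)}}_{\text{d}}$ is a sum of two pointwise maxima of affine functions of $(s_0,s_1)$, and both the objective and the constraint depend on $(s_0,s_1)$ only through the single scalar $u \triangleq ps_1 + (1-p)s_0$. First I would rewrite each of the four joint probabilities $P(Y=j,\hat X=k)$ as an affine function of $u$; a short computation shows, for instance, $P(Y=1,\hat X=0) = q + (1-2q)\,?$ — more precisely each term is of the form $a_{j,k} + b_{j,k}\,u$ with slopes $b_{j,k}\in\{\pm(1-2q)\}$ up to sign bookkeeping, so that the whole objective becomes a univariate convex piecewise-linear function $F_q(u)$ on the interval of admissible $u$. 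The distortion constraint $(1-s_0)(1-p)+(1-s_1)p \le D$ is exactly $u \ge 1-D$, and the box $\mathbf{s}\in[0,1]^2$ forces $u$ to range over $[0,1]$ (every $u\in[0,1]$ is attainable, and this is the step that produces the ``infinitely many mechanisms'': each level set $\{ps_1+(1-p)s_0 = u\}$ is a segment). Hence \eqref{eq:optprivatizer} collapses to $\min_{u} F_q(u)$ subject to $1-D \le u \le 1$.

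Next I would analyze $F_q$. The key observation is that the two inner maxima switch their active branch at $u = \tfrac12$ when $q\neq\tfrac12$: one checks that $\max\{P(Y=1,\hat X=0),P(Y=0,\hat X=0)\}$ picks the branch with the larger constant offset, and similarly for $\hat X=1$, so that $F_q$ is piecewise linear with a single kink, is non-increasing up to some point and non-decreasing after, i.e.\ it has a (possibly flat) minimum. I would compute the value of $F_q$ on each linear piece explicitly in terms of $p,q$ and identify where the unconstrained minimum of $F_q$ over $[0,1]$ is attained. The case $q=\tfrac12$ is degenerate: then $1-2q=0$, all four joint probabilities are constant in $u$ (equal to $\tfrac14$ after summing appropriately), so $F_q$ is identically constant and \emph{every} feasible $\mathbf{s}$, i.e.\ every $\mathbf{s}$ with $ps_1+(1-p)s_0\ge 1-D$, is optimal — this is the first bullet.

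For $q\neq\tfrac12$ I would then intersect the unconstrained minimizer set of $F_q$ with the feasible interval $[1-D,1]$. Two regimes arise according to whether the distortion budget is large enough to force $u$ below the natural minimizer of $F_q$ or not. When $1-D > \max\{p,1-p\}$ the constraint is binding at $u = 1-D$ (the minimizer of $F_q$ lies at a smaller $u$, but $u=1-D$ is the closest feasible point), giving the optimal set $\{ps_1+(1-p)s_0 = 1-D\}$ and the stated accuracy, which is just $F_q(1-D)$ evaluated on the appropriate linear piece — linear in $D$ with slope $\pm(1-2q)$, matching the two subcases $q<\tfrac12$, $q>\tfrac12$. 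Otherwise the unconstrained minimum of $F_q$ is already feasible, and the minimizing set is the flat stretch $\max\{\min\{p,1-p\},1-D\} \le u \le \max\{p,1-p\}$, on which $F_q$ equals the constant $\max\{p(1-q)+(1-p)q,\ pq+(1-p)(1-q)\}$; deciding which of these two constants is the value, and hence pinning down the location $u=\max\{p,1-p\}$ of the kink, is where the sign-casework on $p$ versus $\tfrac12$ and $q$ versus $\tfrac12$ enters, yielding the four-way split in the second bullet.

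The main obstacle I anticipate is purely organizational rather than deep: keeping the case analysis on the four sign combinations of $(p-\tfrac12,\ q-\tfrac12)$ consistent while tracking which affine branch is active in each of the two maxima and on each side of the kink, and verifying that the claimed level-set descriptions of the optimal $\mathbf{s}$ are exactly the preimages under $u(\cdot)$ of the optimal $u$-set (in particular that the box constraints $s_0,s_1\in[0,1]$ never chop off part of that preimage in the stated regimes). Once $F_q$ is written down explicitly as a one-variable function, everything else is reading off intercepts, so I would invest the care in that reduction.
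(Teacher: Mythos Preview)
Your strategy of collapsing \eqref{eq:optprivatizer} to a one-variable problem in $u=ps_1+(1-p)s_0$ is sound and is essentially what the paper does implicitly: its four subproblems are precisely the linear pieces of your $F_q$, and in each the objective and active constraints are already affine in this same $u$. So the route is not really different, just more compactly organised. Two steps in your reduction, however, are wrong as stated and must be repaired before the argument goes through.

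First, the individual joint probabilities $P(Y=j,\hat X=k)$ are \emph{not} affine functions of $u$ alone. For instance $P(Y{=}1,\hat X{=}0)=p(1-q)(1-s_1)+(1-p)q\,s_0$ has $s_1$-coefficient $-p(1-q)$ and $s_0$-coefficient $(1-p)q$; these sit in the ratio $p:(1-p)$ only when $q=\tfrac12$. What \emph{is} true is that the full objective depends only on $u$. Writing each maximum as $\tfrac12(\text{sum}+|\text{difference}|)$, the two sums add to $P(\hat X{=}0)+P(\hat X{=}1)=1$, while the two differences work out to $(1-2q)(p-u)$ and $(1-2q)\bigl(u-(1-p)\bigr)$, giving
\[
F_q(u)=\tfrac12+\tfrac{|1-2q|}{2}\Bigl(|u-p|+|u-(1-p)|\Bigr).
\]
This is the clean univariate function you want, but you must reach it via this cancellation, not by claiming each $P(Y=j,\hat X=k)$ is already a function of $u$.

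Second, from this formula the kinks of $F_q$ sit at $u=p$ and $u=1-p$, not at $u=\tfrac12$. The flat minimum of $F_q$ is the whole interval $[\min\{p,1-p\},\max\{p,1-p\}]$, with value $\tfrac12+\tfrac{|1-2q|}{2}\,|2p-1|=\max\{p(1-q)+(1-p)q,\;pq+(1-p)(1-q)\}$, which is exactly the interval and accuracy appearing in the theorem's second bullet. Once these two points are corrected, intersecting with the feasible set $u\in[\,\max\{0,1-D\},\,1\,]$ gives the two regimes as you describe, and the remaining sign casework on $(p-\tfrac12,\,q-\tfrac12)$ is just recording which branch of each absolute value is active.
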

\textit{Proof sketch:} The proof of Theorem~\ref{thm:binary} is provided in Appendix~\ref{binaryproof}. We briefly sketch the proof details here. For the special case $q=\frac{1}{2}$, the solution is trivial since the private variable $Y$ is independent of the public variable $X$. Thus, the optimal solution is given by any $s_0$, $s_1$ that satisfies the distortion constraint $\{s_0,s_1| ps_1+(1-p)s_0\ge 1-D, s_0, s_1\in[0,1]\}$. For $q\neq\frac{1}{2}$, we separate the optimization problem in \eqref{eq:optprivatizer} into four subproblems based on the decision of the adversary. We then compute the optimal privacy mechanism of the privatizer in each subproblem. Summarizing the optimal solutions to the subproblems for different values of $p,q$ and $D$ yields Theorem \ref{thm:binary}.

\textit{Remark:} Note that if $1-D >\max\{p,1-p\}$, i.e., $D <\min\{p,1-p\}$, the privacy guarantee achieved by the optimal PDI mechanism (the MAP adversary's accuracy of correctly guessing the private variable) decreases linearly with $D$. For $D \ge \min\{p,1-p\}$, the optimal PDI mechanism achieves a constant privacy guarantee regardless of $D$. However, in this case, the privatizer can just use the optimal privacy mechanism with $D=\min\{p,1-p\}$ to optimize privacy guarantee without further sacrificing utility.

\subsection{Data-driven Approach for Binary Data Model}
In practice, the joint distribution of $(X,Y)$ is often unknown to the data holder. Instead, the data holder has access to a dataset $\mathcal{D}$, which is used to learn a good privatization mechanism in a generative adversarial fashion. In the training phase, the data holder learns the parameters of the conditional generative model (representing the privatization scheme) by competing against a computational adversary represented by a neural network. The details of both neural networks are provided later in this section. When convergence is reached, we evaluate the performance of the learned privatization scheme by computing the accuracy of inferring $Y$ under a strong MAP adversary that: (a) has access to the joint distribution of $(X,Y)$, (b) has knowledge of the learned privacy mechanism, and (c) can compute the MAP rule. Ultimately, the data holder's hope is to learn a privatization scheme that matches the one obtained under the game-theoretic framework, where both the adversary and privatizer are assumed to have access to $P(X,Y)$. To evaluate our data-driven approach, we compare the mechanisms learned in an adversarial fashion on $\mathcal{D}$ with the game-theoretically optimal ones.

Since the private variable $Y$ is binary, we use the {empirical log-loss function for the adversary} (see Equation \eqref{eq:lossCEbinary}).
For a fixed $\theta_{p}$, the adversary learns the optimal $\theta^*_a$ by maximizing {$-L_{\text{XE}}(h(g(X,Y;\theta_{p});\theta_{a}),Y)$} given in Equation \eqref{eq:lossCEbinary}. For a fixed $\theta_a$, the privatizer learns the optimal $\theta_p^*$ by minimizing {$-L_{\text{XE}}(h(g(X,Y;\theta_{p});\theta_{a}),Y)$} subject to the distortion constraint (see Equation \eqref{eq:learnedprivatizer}).
Since both $X$ and $Y$ are binary variables, we can use the privatizer parameter $\theta_{p}$ to represent the privacy mechanism $\mathbf{s}$ directly. For the adversary, we define $\theta_{a}=(\theta_{a,0}, \theta_{a,1})$, where $\theta_{a,0}=P(Y=0|\hat{X}=0)$ and $\theta_{a,1}=P(Y=1|\hat{X}=1)$. Thus, given a privatized public variable input $g(x_{(i)},y_{(i)};\theta_{p})\in\{0,1\}$, the output belief of the adversary guessing $y_{(i)}=1$ can be written as $(1-\theta_{a,0})(1-g(x_{(i)},y_{(i)};\theta_{p}))+\theta_{a,1}g(x_{(i)},y_{(i)};\theta_{p})$.

\begin{figure}
	\centering
	\includegraphics[width=0.8\textwidth]{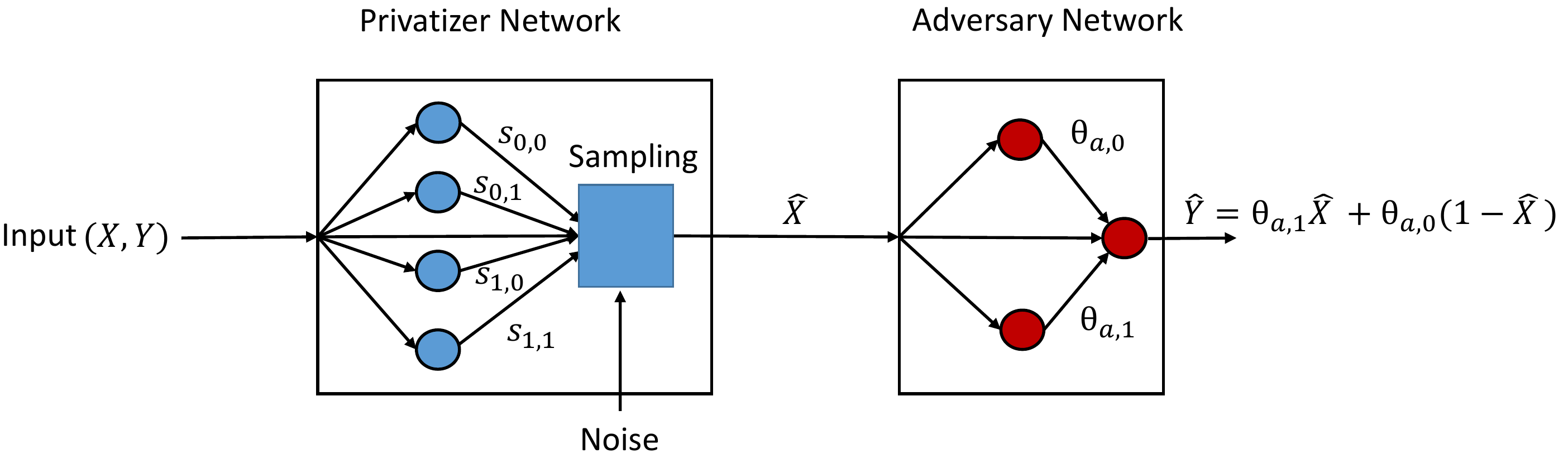}	
	\caption{Neural network structure of the privatizer and adversary for binary data model}
	\label{fig:nnbinary}
\end{figure}

For PDD privacy mechanisms, we have $\theta_{p}=\mathbf{s}=\{s_{0,0},s_{0,1},s_{1,0},s_{1,1}\}$. Given the fact that both $x_{(i)}$ and $y_{(i)}$ are binary, we use two simple neural networks to model the privatizer and the adversary. As shown in Figure \ref{fig:nnbinary}, the privatizer is modeled as a two-layer neural network parameterized by $\mathbf{s}$, while the adversary is modeled as a two-layer neural network classifier. From the perspective of the privatizer, the belief of an adversary guessing $y_{(i)}=1$ conditioned on the input $(x_{(i)},y_{(i)})$ is given by
\begin{align}
h(g(x_{(i)},y_{(i)}; \mathbf{s});\theta_{a})& =\theta_{a,1}
P(\hat{x}_{(i)}=1)+(1-\theta_{a,0})P(\hat{x}_{(i)}=0),
\end{align}
where
\begin{align}
\nonumber
P(\hat{x}_{(i)}=1)=& x_{(i)}y_{(i)}s_{1,1}+(1-x_{(i)})y_{(i)}(1-s_{0,1})\\\nonumber & +x_{(i)}(1-y_{(i)})s_{1,0}+(1-x_{(i)})(1-y_{(i)})(1-s_{0,0}),\\\nonumber
P(\hat{x}_{(i)}=0)=& x_{(i)}y_{(i)}(1-s_{1,1})+(1-x_{(i)})y_{(i)}s_{0,1}\\\nonumber &+x_{(i)}(1-y_{(i)})(1-s_{1,0})+(1-x_{(i)})(1-y_{(i)})s_{0,0}.
\end{align}
Furthermore, the expected distortion is given by
\begin{align}
\mathbb{E}_{\mathcal{D}}[d(g(X,Y;\mathbf{s}), X)]=& \frac{1}{n}\sum\limits_{i=1}^{n} [x_{(i)}y_{(i)}(1-s_{1,1})+x_{(i)}(1-y_{(i)})(1-s_{1,0})\\\nonumber
& +(1-x_{(i)})y_{(i)}(1-s_{0,1})+(1-x_{(i)})(1-y_{(i)})(1-s_{0,0})].
\end{align}
Similar to the PDD case, we can also compute the belief of guessing $y_{(i)}=1$ conditional on the input $(x_{(i)},y_{(i)})$ for the PDI schemes. Observe that in the PDI case, $\theta_p=\mathbf{s}=\{s_0,s_1\}$. Therefore, we have
\begin{align}
h(g(x_{(i)},y_{(i)};\mathbf{s} );\theta_{a})=\theta_{a,1}[x_{(i)}s_{1}+(1-x_{(i)})(1-s_{0})]+(1-\theta_{a,0})[(1-x_{(i)})s_{0}+x_{(i)}(1-s_{1})].
\end{align}
Under PDI schemes, the expected distortion is given by
\begin{align}
\mathbb{E}_{\mathcal{D}}[d(g(X,Y;\mathbf{s}), X)]=\frac{1}{n}\sum\limits_{i=1}^{n}[x_{(i)}(1-s_{1})+(1-x_{(i)})(1-s_{0})].
\end{align}
Thus, we can use Algorithm \ref{alg:euclid} proposed in Section \ref{sec:learningadversary} to learn the optimal PDD and PDI privacy mechanisms from the dataset.

\subsection{Illustration of Results}
We now evaluate our proposed GAP framework using synthetic datasets. We focus on the setting in which $Y=X\oplus N$, where $N\in \{0,1\}$ is a random variable independent of $X$ and follows a Bernoulli distribution with parameter $q$. We generate two synthetic datasets with $(p,q)$ equal to $(0.75,0.25)$ and $(0.5,0.25)$, respectively. Each synthetic dataset used in this experiment contains $10,000$ training samples and $2,000$ test samples. We use Tensorflow \citep{abadi2016tensorflow} to train both the privatizer and the adversary using Adam optimizer with a learning rate of $0.01$ and a minibatch size of $200$. The distortion constraint is enforced by the penalty method provided in \eqref{eq:penaltymethod}. 

\begin{figure}[H]
	\centering
	\begin{subfigure}[t]{0.45\textwidth}
		\includegraphics[width=0.98\columnwidth]{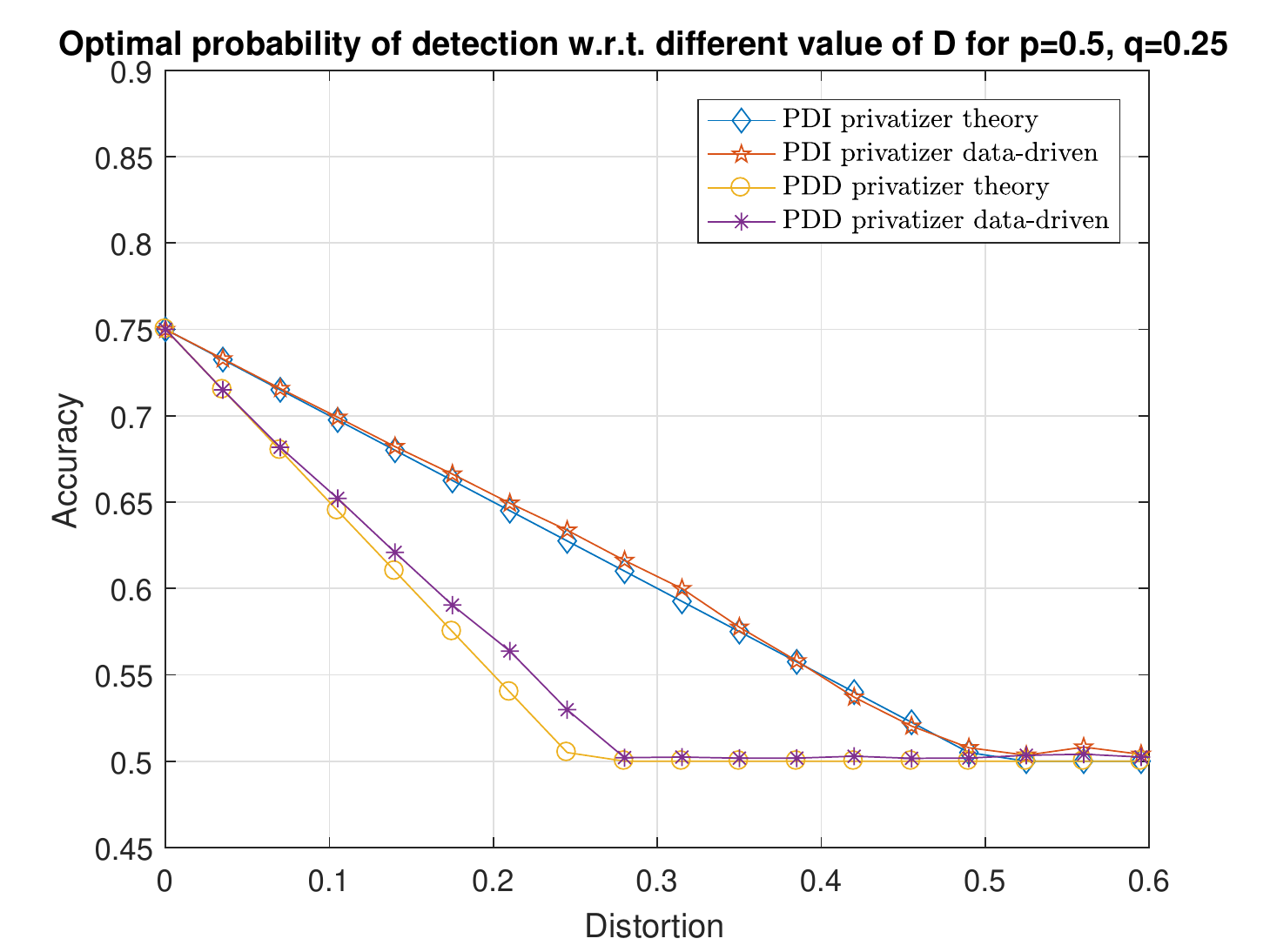}
		\caption{Performance of privacy mechanisms against MAP adversary for $p=0.5$}
		\label{fig:tl05}
	\end{subfigure}\qquad
	\begin{subfigure}[t]{0.45\textwidth}
		\includegraphics[width=0.98\columnwidth]{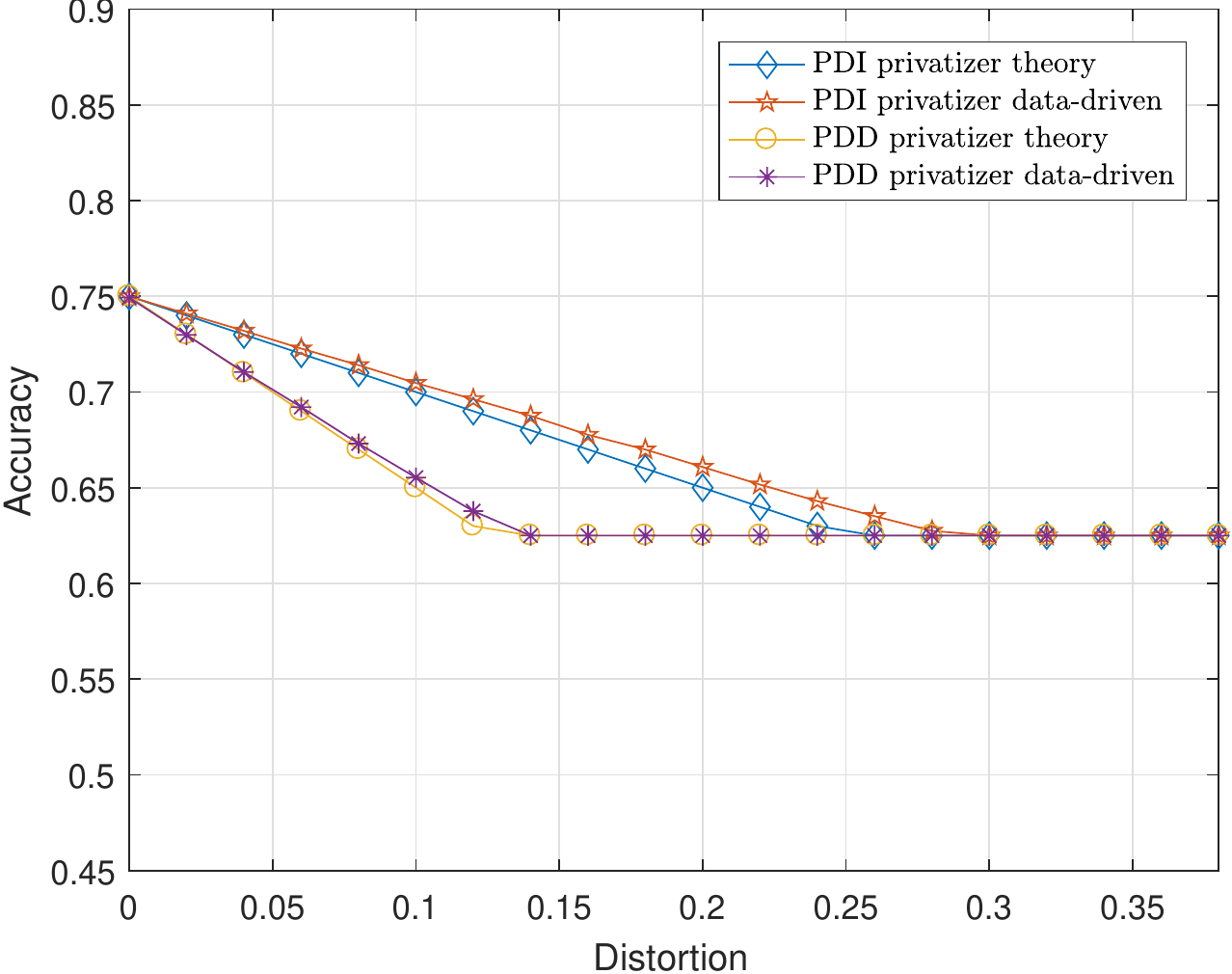}
		\caption{Performance of privacy mechanisms against MAP adversary for $p=0.75$}
		\label{fig:tl075}
	\end{subfigure}\\
	\begin{subfigure}[t]{0.45\textwidth}
		\includegraphics[width=0.98\columnwidth]{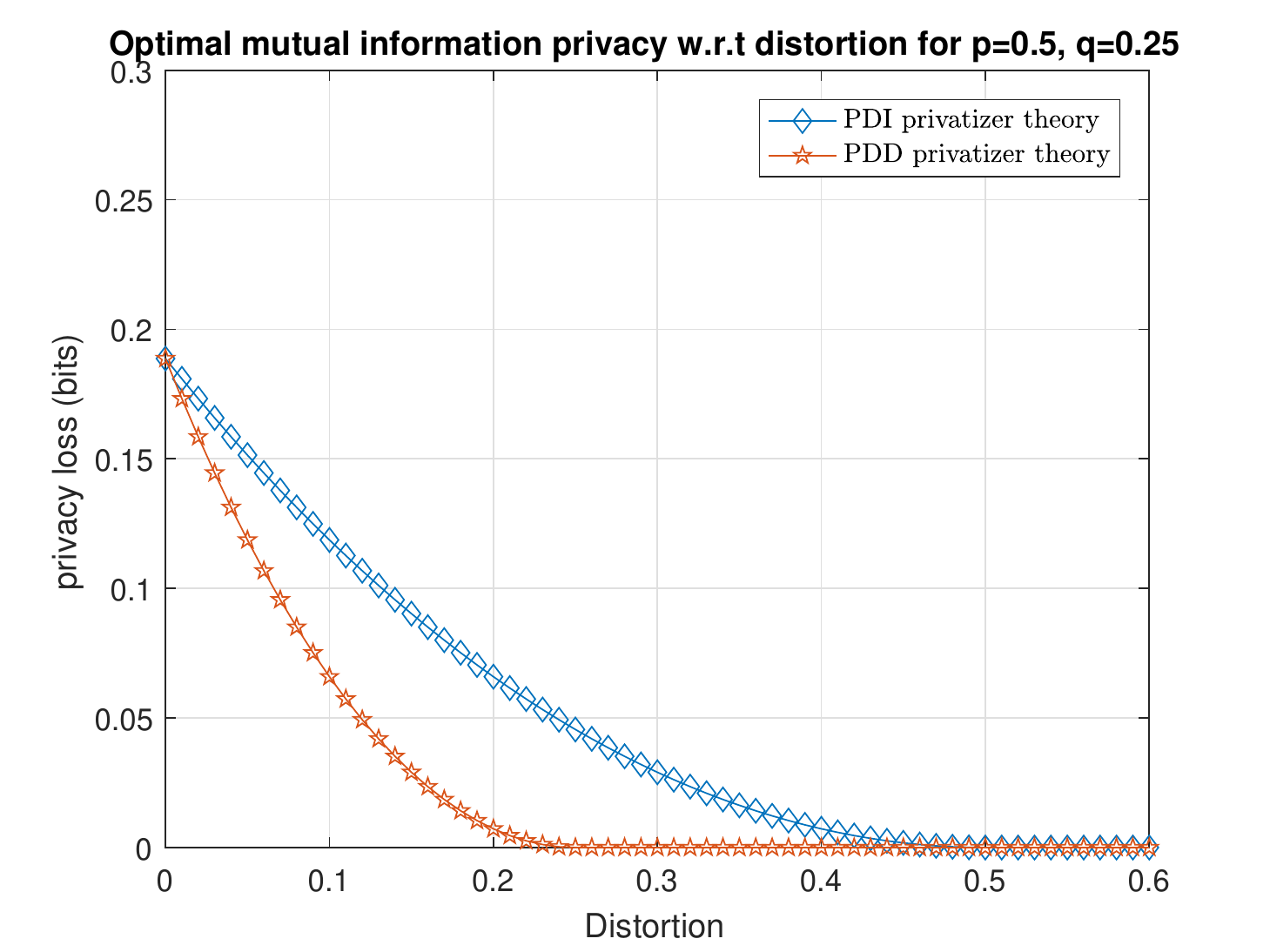}
		\caption{Performance of privacy mechanisms under MI privacy metric for $p=0.5$}
		\label{fig:mi05}
	\end{subfigure}\qquad
	\begin{subfigure}[t]{0.45\textwidth}
		\includegraphics[width=0.98\columnwidth]{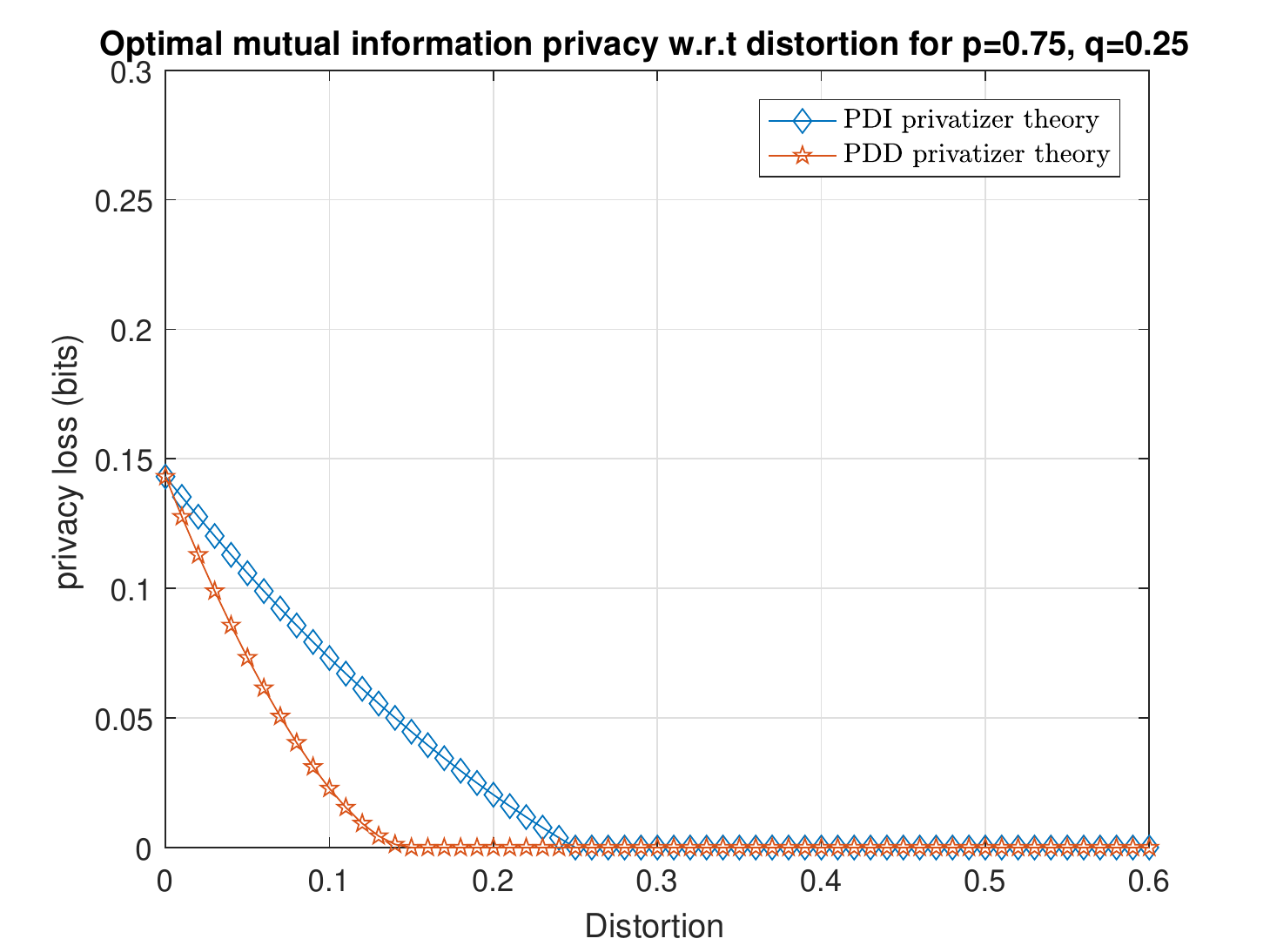}
		\caption{Performance of privacy mechanisms under MI privacy metric for $p=0.75$}
		\label{fig:mi075}
	\end{subfigure}
	\caption{Privacy-distortion tradeoff for binary data model}%
	\label{fig:binaryperformancetl}
\end{figure}

Figure \ref{fig:tl05} illustrates the performance of both optimal PDD and PDI privacy mechanisms against a strong theoretical MAP adversary when $(p, q)=(0.5, 0.25)$. It can be seen that the {inference} accuracy of the MAP adversary reduces as the distortion increases for both optimal PDD and PDI privacy mechanisms. As one would expect, the PDD privacy mechanism achieves a lower {inference} accuracy for the adversary, i.e., better privacy, than the PDI mechanism. Furthermore, when the distortion is higher than some threshold, the {inference} accuracy of the MAP adversary saturates regardless of the distortion. This is due to the fact that the correlation between the private variable and the privatized public variable cannot be further reduced once the distortion is larger than the saturation threshold. Therefore, increasing distortion will not further reduce the accuracy of the MAP adversary. We also observe that the privacy mechanism obtained via the data-driven approach performs very well when pitted against the MAP adversary (maximum accuracy difference around $3 \%$ compared to the theoretical approach). In other words, for the binary data model, the data-driven version of GAP can yield privacy mechanisms that perform as well as the mechanisms computed under the theoretical version of GAP, which assumes that the privatizer has access to the underlying distribution of the dataset.

Figure \ref{fig:tl075} shows the performance of both optimal PDD and PDI privacy mechanisms against the MAP adversary for $(p, q)=(0.75,0.25)$. Similar to the equal prior case, we observe that both PDD and PDI privacy mechanisms reduce the accuracy of the MAP adversary as the distortion increases and saturate when the distortion goes above a certain threshold. It can be seen that the saturation thresholds for both PDD and PDI privacy mechanisms in Figure \ref{fig:tl075} are lower than the ``equal prior'' case plotted in Figure \ref{fig:tl05}. The reason is that when $(p, q)=(0.75,0.25)$, the correlation between $Y$ and $X$ is weaker than the ``equal prior'' case. Therefore, it requires less distortion to achieve the same privacy. We also observe that the performance of the GAP mechanism obtained via the data-driven approach is comparable to the mechanism computed via the theoretical approach.

The performance of the GAP mechanism obtained using {the $\log$-loss function} (i.e., MI privacy) is plotted in Figure \ref{fig:mi05} and \ref{fig:mi075}. Similar to the MAP adversary case, as the distortion increases, the mutual information between the private variable and the privatized public variable achieved by the optimal PDD and PDI mechanisms decreases as long as the distortion is below some threshold. When the distortion goes above the threshold, the optimal privacy mechanism is able to make the private variable and the privatized public variable independent regardless of the distortion. Furthermore, the values of the saturation thresholds are very close to what we observe in Figure \ref{fig:tl05} and \ref{fig:tl075}.

\section{Binary Gaussian Mixture Model}
\label{sec:gaussian}

Thus far, we have studied a simple binary dataset model. In many real datasets, the sample space of variables often takes more than just two possible values. It is well known that the Gaussian distribution is a  flexible approximate for many distributions \citep{weisstein2002normal}. Therefore, in this section, we study a setting where $Y \in \{0, 1\}$ and ${X}$ is a Gaussian random variable whose mean and variance are dependent on $Y$. Without loss of generality, let $\mathbb{E}[X|Y = 1] = - \mathbb{E}[X|Y= 0]=\mu$ and $P(Y =1) = \tilde{p}$. Thus, $X|Y =0 \sim \mathcal{N}(-\mu, \sigma_0^2)$  and $X|Y =1 \sim \mathcal{N}(\mu, \sigma_1^2)$.

Similar to the binary data model, we study two privatization schemes: (a) private-data independent (PDI) schemes (where $\hat{X} = g(X)$), and (b) private-data dependent (PDD) schemes (where $\hat{X} = g(X, Y)$). In order to have a tractable model for the privatizer, we assume $g(X,Y)$ is realized by adding an affine function of an independently generated random noise to the public variable $X$. The affine function enables controlling both the mean and variance of the privatized data. In particular,  we consider $g(X,Y)= X + (1-Y)\beta_0 - Y\beta_1 + (1-Y)\gamma_0N + Y \gamma_1N$, in which $N$ is a one dimensional random variable and $\beta_0 ,\beta_1,\gamma_0,\gamma_1$ are constant parameters. The goal of the privatizer is to sanitze the public data $X$ subject to the distortion constraint $\mathbb{E}_{\hat{X},X}||\hat{X}-X||_2^2\le D$. 

\subsection{Theoretical Approach for Binary Gaussian Mixture Model}
We now investigate the theoretical approach under which both the privatizer and the adversary have access to $P(X,Y)$. To make the problem more tractable, let us consider a slightly simpler setting in which $\sigma_0 = \sigma_1 = \sigma$. We will relax this assumption later when we take a data-driven approach. We further assume that $N$ is a standard Gaussian random variable. One might, rightfully, question our choice of focusing on adding (potentially $Y$-dependent) Gaussian noise. Though other distributions can be considered, our approach is motivated by the following two reasons:
\begin{itemize}
	\item (a) Even though it is known that adding Gaussian noise is not the worst case noise adding mechanism for non-Gaussian $X$ \citep{shamai1992worst}, identifying the optimal noise distribution is mathematically intractable. Thus, for tractability and ease of analysis, we choose Gaussian noise.
	\item (b) Adding Gaussian noise to each data entry preserves the conditional Gaussianity of the released dataset.
\end{itemize}
In what follows, we will analyze a variety of PDI and PDD mechanisms.
\subsubsection{PDI Gaussian Noise Adding Privacy Mechanism}
We consider a PDI noise adding privatization scheme which adds an affine function of the standard Gaussian noise to the public variable. Since the privacy mechanism is PDI, we have $g(X,Y)=X + \beta +\gamma N$, where $\beta$ and $\gamma$ are constant parameters and $N\sim\mathcal{N}(0,1)$. Using the classical Gaussian hypothesis testing analysis \cite{van2004detection}, it is straightforward to verify that the {optimal inference accuracy (i.e., probability of detection) of the MAP adversary is given by
\begin{equation}
\label{eq:gaussianscheme0}
P^{\text{(G)}}_{\text{d}}=\tilde{p} Q\left(-  {\frac{\alpha}{2} }  + {\frac{1}{\alpha}} \ln\left( { \frac{1  - \tilde{p}}{\tilde{p}}} \right) \right) + (1 - \tilde{p}) Q\left(-  {\frac{\alpha}{2}}  - {\frac{1}{\alpha}} \ln\left( {\frac{1  - \tilde{p}}{\tilde{p}}} \right) \right),
\end{equation}}
where $\alpha=\frac{2\mu}{\sqrt{\gamma^2+\sigma^2}}$ and $Q(x)=\frac{1}{\sqrt{2\pi}}\int_{x}^{\infty}\exp(-\frac{u^2}{2}) du$. Moreover, since $\mathbb{E}_{\hat{X},X}[ d(\hat{X}, X)] = \beta^2+\gamma^2$, the distortion constraint is equivalent to $\beta^2+\gamma^2 \leq D$. 
\begin{Theorem}
	\label{thm:gaussianscheme0}
	For a PDI Gaussian noise adding privatization scheme given by  $g(X,Y)=X +\beta+\gamma N $, with $\beta \in \mathbb{R}$ and $\gamma\ge0$, the optimal parameters are given by
	\begin{align}
	\beta^*=0,\gamma^*=\sqrt{D}.
	\end{align}
	Let $\alpha^*=\frac{2\mu}{\sqrt{D+\sigma^2}}$. For this optimal scheme, the accuracy of the MAP adversary is
	\begin{align}
	{P^{\text{(G)*}}_{\text{d}}}=\tilde{p} Q\left(-  {\frac{\alpha^*}{2} }  + {\frac{1}{\alpha^*}} \ln\left( { \frac{1  - \tilde{p}}{\tilde{p}}} \right) \right) + (1 - \tilde{p}) Q\left(-  {\frac{\alpha^*}{2}}  - {\frac{1}{\alpha^*}} \ln\left( {\frac{1  - \tilde{p}}{\tilde{p}}} \right) \right).
	\end{align}
\end{Theorem}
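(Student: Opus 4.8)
\textit{Proof proposal.} The plan is to first exploit a structural observation: the MAP adversary's accuracy in \eqref{eq:gaussianscheme0} depends on the mechanism only through the single scalar $\alpha=\tfrac{2\mu}{\sqrt{\gamma^2+\sigma^2}}$ (and the prior $\tilde p$), and in particular it is \emph{independent of the bias} $\beta$. Indeed, conditioned on $Y=j$ the released variable $\hat X=X+\beta+\gamma N$ is Gaussian with mean $(-1)^{1-j}\mu+\beta$ and common variance $\sigma^2+\gamma^2$, so the common translation by $\beta$ can be undone by the adversary and changes neither the binary hypothesis test nor its minimal error. Hence the privatizer's problem in effect reduces to minimizing $P^{\text{(G)}}_{\text{d}}(\alpha)$ subject to $\beta^2+\gamma^2\le D$, $\gamma\ge 0$. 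Since only $\gamma$ affects the objective, one should spend the entire budget on $\gamma$: any $\beta\neq 0$ forces $\gamma\le\sqrt{D-\beta^2}<\sqrt D$, so (pending the monotonicity claim below) $\beta^*=0$ is forced and $\gamma$ ranges over $[0,\sqrt D]$.

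Second, I would prove that $P^{\text{(G)}}_{\text{d}}$ is strictly increasing in $\alpha$. Writing $c=\ln\frac{1-\tilde p}{\tilde p}$ and using $Q'(x)=-\phi(x)$ with $\phi$ the standard normal density, differentiating $P^{\text{(G)}}_{\text{d}}(\alpha)=\tilde p\,Q\!\big(-\tfrac{\alpha}{2}+\tfrac{c}{\alpha}\big)+(1-\tilde p)\,Q\!\big(-\tfrac{\alpha}{2}-\tfrac{c}{\alpha}\big)$ yields $\tfrac{\partial}{\partial\alpha}P^{\text{(G)}}_{\text{d}}=\tilde p\,\phi\!\big(-\tfrac{\alpha}{2}+\tfrac{c}{\alpha}\big)\big(\tfrac12+\tfrac{c}{\alpha^2}\big)+(1-\tilde p)\,\phi\!\big(-\tfrac{\alpha}{2}-\tfrac{c}{\alpha}\big)\big(\tfrac12-\tfrac{c}{\alpha^2}\big)$. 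The key simplification is the identity $\tilde p\,e^{c/2}=(1-\tilde p)\,e^{-c/2}=\sqrt{\tilde p(1-\tilde p)}$ (equivalently: the two conditional Gaussian densities agree at the MAP threshold). Expanding $\phi$ shows that both products $\tilde p\,\phi(-\tfrac{\alpha}{2}+\tfrac{c}{\alpha})$ and $(1-\tilde p)\,\phi(-\tfrac{\alpha}{2}-\tfrac{c}{\alpha})$ equal the \emph{same} positive quantity $K:=\tfrac{1}{\sqrt{2\pi}}\sqrt{\tilde p(1-\tilde p)}\,e^{-\alpha^2/8-c^2/(2\alpha^2)}$; the $\pm\tfrac{c}{\alpha^2}$ terms then cancel and $\tfrac{\partial}{\partial\alpha}P^{\text{(G)}}_{\text{d}}=K>0$.

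Finally, since $\alpha=\tfrac{2\mu}{\sqrt{\gamma^2+\sigma^2}}$ is strictly decreasing in $\gamma\ge 0$ (assuming the non-degenerate case $\mu>0$; if $\mu=0$ then $X\perp Y$ and privacy is already maximal), the composite map $\gamma\mapsto P^{\text{(G)}}_{\text{d}}$ is strictly decreasing, so its minimum over $[0,\sqrt D]$ is attained uniquely at $\gamma^*=\sqrt D$, forcing $\beta^*=0$ and meeting the distortion constraint with equality. Substituting $\gamma^*=\sqrt D$ gives $\alpha^*=\tfrac{2\mu}{\sqrt{D+\sigma^2}}$, and plugging this into \eqref{eq:gaussianscheme0} gives the stated $P^{\text{(G)*}}_{\text{d}}$. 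The one genuine obstacle is the monotonicity step; the $\beta$-elimination and the final substitution are bookkeeping. I expect the sign computation to be the crux, and the identity $\tilde p\,e^{c/2}=(1-\tilde p)\,e^{-c/2}$ is what makes it collapse cleanly.
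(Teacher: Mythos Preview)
Your proposal is correct and follows essentially the same approach as the paper: both arguments compute $\partial P^{\text{(G)}}_{\text{d}}/\partial\alpha$, exploit the identity $\tilde p\,\phi(-\tfrac{\alpha}{2}+\tfrac{c}{\alpha})=(1-\tilde p)\,\phi(-\tfrac{\alpha}{2}-\tfrac{c}{\alpha})$ (the paper writes it as a ratio equal to $1$ using $\eta=\tfrac{1-\tilde p}{\tilde p}$) to collapse the cross terms, conclude monotonicity in $\alpha$, and then reduce to maximizing $\gamma$ under the budget. Your preliminary translation-invariance remark explaining why $\beta$ drops out of the objective is a nice touch that the paper leaves implicit, but the substance of the proof is the same.
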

The proof of Theorem \ref{thm:gaussianscheme0} is provided in Appendix \ref{gaussianscheme0proof}. We observe that the PDI Gaussian noise adding privatization scheme which minimizes the inference accuracy of the MAP adversary with distortion upper-bounded by $D$ is to add a zero-mean Gaussian noise with variance $D$.


\subsubsection{PDD Gaussian Noise Adding Privacy Mechanism}	
\label{sec:pddgaussian}
For PDD privatization schemes, we first consider a simple case in which $\gamma_0=\gamma_1=0$. Without loss of generality, we assume that both $\beta_0$ and $\beta_1$ are non-negative. The privatized data is given by $\hat{X} = X + (1- Y)\beta_0 - Y\beta_1$. This is a PDD mechanism since $\hat{X}$ depends on both $X$ and $Y$. Intuitively, this mechanism privatizes the data by shifting the two Gaussian distributions (under $Y = 0$ and $Y = 1$) closer to each other. Under this mechanism, it is easy to show that the {adversary's MAP probability of inferring the private variable $Y$ from $\hat{X}$ is given by $P^{\text{(G)}}_{\text{d}}$} in \eqref{eq:gaussianscheme0}
with $\alpha = { \frac{2\mu  -(\beta_1 + \beta_0)}{\sigma} }$. Observe that since $d(\hat{X}, X) = ((1- Y)\beta_0 - Y\beta_1)^2$, we have $\mathbb{E}_{\hat{X},X}[ d(\hat{X}, X)] = (1-\tilde{p})\beta_0^2 + \tilde{p}\beta_1^2$. Thus, the distortion constraint implies $(1-\tilde{p})\beta_0^2 + \tilde{p}\beta_1^2 \leq D$. 
\begin{Theorem}
	\label{thm:gaussianscheme1}
	For a PDD privatization scheme given by  $g(X,Y)=X + (1- Y)\beta_0 -Y\beta_1 $, $\beta_0,\beta_1\ge0$, the optimal parameters are given by
	\begin{align}
	\beta^*_0=\sqrt{\frac{\tilde{p}D}{1-\tilde{p}}}, \quad \beta^*_1=\sqrt{\frac{(1-\tilde{p})D}{\tilde{p}}}.
	\end{align}
	For this optimal PDD privatization scheme, the accuracy of the MAP adversary is given by \eqref{eq:gaussianscheme0} with $\alpha = { \frac{2\mu  -(\sqrt{\frac{(1-\tilde{p})D}{\tilde{p}}} + \sqrt{\frac{\tilde{p}D}{1-\tilde{p}}})}{\sigma} }.$
\end{Theorem}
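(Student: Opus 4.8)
The plan is to use the reduction already set up just before the statement: for this mechanism family the MAP adversary's accuracy equals $P^{\text{(G)}}_{\text{d}}$ of \eqref{eq:gaussianscheme0} evaluated at the single scalar $\alpha=\frac{2\mu-(\beta_0+\beta_1)}{\sigma}$, while the distortion constraint is $(1-\tilde p)\beta_0^2+\tilde p\beta_1^2\le D$ with $\beta_0,\beta_1\ge 0$. So the privatizer's problem is really to choose $(\beta_0,\beta_1)$ inside this quarter-ellipse so as to steer $\alpha$ to the value that is worst for the adversary's objective.

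Step 1 is to show that $\alpha\mapsto P^{\text{(G)}}_{\text{d}}(\alpha)$ is strictly increasing on $(0,\infty)$. Write $a_1=-\tfrac{\alpha}{2}+\tfrac{1}{\alpha}\ln\tfrac{1-\tilde p}{\tilde p}$ and $a_2=-\tfrac{\alpha}{2}-\tfrac{1}{\alpha}\ln\tfrac{1-\tilde p}{\tilde p}$, so that $P^{\text{(G)}}_{\text{d}}=\tilde p\,Q(a_1)+(1-\tilde p)\,Q(a_2)$. Differentiating with $Q'(x)=-\tfrac{1}{\sqrt{2\pi}}e^{-x^2/2}$ and using $a_1+a_2=-\alpha$, $a_1-a_2=\tfrac{2}{\alpha}\ln\tfrac{1-\tilde p}{\tilde p}$, hence $a_1^2-a_2^2=-2\ln\tfrac{1-\tilde p}{\tilde p}$, one gets the identity $\tilde p\,e^{-a_1^2/2}=(1-\tilde p)\,e^{-a_2^2/2}$; the terms proportional to $\tfrac{1}{\alpha^2}\ln\tfrac{1-\tilde p}{\tilde p}$ then cancel and $\tfrac{d}{d\alpha}P^{\text{(G)}}_{\text{d}}=\tfrac{1}{\sqrt{2\pi}}\,\tilde p\,e^{-a_1^2/2}>0$. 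Thus $P^{\text{(G)}}_{\text{d}}$ increases from $\max\{\tilde p,1-\tilde p\}$ (as $\alpha\downarrow 0$) up to $1$ (as $\alpha\to\infty$), so the adversary's accuracy is minimized by making $\alpha$ as small as possible, i.e. by maximizing $\beta_0+\beta_1$.

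Step 2 is the resulting geometric optimization: maximize $s=\beta_0+\beta_1$ over $(1-\tilde p)\beta_0^2+\tilde p\beta_1^2\le D$, $\beta_0,\beta_1\ge 0$. The objective is linear and the region convex, so the distortion constraint is tight at the maximizer; by Cauchy--Schwarz, $\beta_0+\beta_1=\tfrac{1}{\sqrt{1-\tilde p}}\big(\sqrt{1-\tilde p}\,\beta_0\big)+\tfrac{1}{\sqrt{\tilde p}}\big(\sqrt{\tilde p}\,\beta_1\big)\le\sqrt{\tfrac{1}{1-\tilde p}+\tfrac{1}{\tilde p}}\,\sqrt{(1-\tilde p)\beta_0^2+\tilde p\beta_1^2}\le\sqrt{\tfrac{D}{\tilde p(1-\tilde p)}}$, with equality iff $(1-\tilde p)\beta_0=\tilde p\beta_1$ together with $(1-\tilde p)\beta_0^2+\tilde p\beta_1^2=D$. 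Solving these two equations gives $\beta_0^*=\sqrt{\tfrac{\tilde p D}{1-\tilde p}}$, $\beta_1^*=\sqrt{\tfrac{(1-\tilde p)D}{\tilde p}}$, hence $\beta_0^*+\beta_1^*=\sqrt{\tfrac{D}{\tilde p(1-\tilde p)}}$, which yields exactly the claimed $\alpha=\frac{2\mu-(\sqrt{(1-\tilde p)D/\tilde p}+\sqrt{\tilde p D/(1-\tilde p)})}{\sigma}$. (A Lagrange-multiplier stationarity argument gives the same pair; I would include whichever is shorter.)

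The main obstacle is Step 1: without monotonicity of $P^{\text{(G)}}_{\text{d}}$ in $\alpha$ one cannot legitimately collapse the vector problem to maximizing $\beta_0+\beta_1$, and the cancellation producing $\tfrac{d}{d\alpha}P^{\text{(G)}}_{\text{d}}=\tfrac{1}{\sqrt{2\pi}}\tilde p\,e^{-a_1^2/2}$ is the crux. A secondary bookkeeping point is the sign of the effective separation: the accuracy formula \eqref{eq:gaussianscheme0} presumes $\beta_0+\beta_1\le 2\mu$ (so $\alpha\ge 0$), which holds at the claimed optimum precisely when $D\le 4\mu^2\tilde p(1-\tilde p)$; for larger $D$ one can drive $\alpha$ to $0$ and then every feasible $(\beta_0,\beta_1)$ with $\beta_0+\beta_1=2\mu$ is optimal, a degeneracy analogous to the $q=\tfrac12$ case of Theorem~\ref{thm:binary}, so the statement should be read in the regime $D\le 4\mu^2\tilde p(1-\tilde p)$.
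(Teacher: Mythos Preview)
Your proposal is correct and follows essentially the same route as the paper: first establish that $P^{\text{(G)}}_{\text{d}}$ is increasing in $\alpha$ via the cancellation identity $\tilde p\,e^{-a_1^2/2}=(1-\tilde p)\,e^{-a_2^2/2}$ (exactly the computation the paper carries out in the proof of Theorem~\ref{thm:gaussianscheme0} and then invokes here), and then reduce to maximizing $\beta_0+\beta_1$ over the quarter-ellipse. The only cosmetic difference is that the paper dispatches the constrained maximization by KKT while you use Cauchy--Schwarz; your added caveat on the regime $D\le 4\mu^2\tilde p(1-\tilde p)$ is a nice observation the paper leaves implicit.
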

The proof of Theorem \ref{thm:gaussianscheme1} is provided in Appendix \ref{gaussianscheme1proof}. When $P(Y=1)=P(Y=0)=\frac{1}{2}$, we have $\beta_0=\beta_1=\sqrt{D}$, which implies that the optimal privacy mechanism for this particular case is to shift the two Gaussian distributions closer to each other equally by $\sqrt{D}$ regardless of the variance $\sigma^2$. When $P(Y=1)=\tilde{p}>\frac{1}{2}$, the Gaussian distribution with a lower prior probability, in this case, $X|Y=0$, gets shifted $\frac{\tilde{p}}{1-\tilde{p}}$ times more than $X|Y=1$.


Next, we consider a slightly more complicated case in which $\gamma_0=\gamma_1=\gamma\ge0$. Thus, the privacy mechanism is given by $g(X,Y) = X + (1- Y)\beta_0 - Y\beta_1 + \gamma N$, where $N \sim \mathcal{N}(0,1)$. Intuitively, this mechanism privatizes the data by shifting the two Gaussian distributions (under $Y = 0$ and $Y = 1$) closer to each other and adding another Gaussian noise $N\in\mathcal{N}(0,1)$ scaled by a constant $\gamma$. In this case, the MAP probability of inferring the private variable $Y$ from $\hat{X}$ is given by \eqref{eq:gaussianscheme0}
with $\alpha = {\frac{2\mu  -(\beta_1 + \beta_0)}{\sqrt{\gamma^2+\sigma^2}}}$. Furthermore, the distortion constraint is equivalent to $(1-\tilde{p})\beta_0^2 + \tilde{p}\beta_1^2 +\gamma^2\leq D$.
\begin{Theorem}
	\label{thm:gaussianscheme2}
	For a PDD privatization scheme given by  $g(X,Y) = X + (1- Y)\beta_0 - Y\beta_1 + \gamma N$ with $\beta_0,\beta_1,\gamma\ge0$, the optimal parameters $\beta_0^*,\beta_1^*, \gamma^*$ are given by the solution to
	\begin{align}
	\label{eq:gaussianscheme2opt}
	\min \limits_{\beta_0,\beta_1,\gamma}& \quad { \frac{2\mu - \beta_0-\beta_1}{\sqrt{\gamma^2 + \sigma^2}}}\\\nonumber
	s.t. & \quad(1-\tilde{p})\beta_0^2 + \tilde{p}\beta_1^2 + \gamma^2 \leq D\\\nonumber
	& \quad\beta_0,\beta_1,\gamma\ge 0.
	\end{align}
	Using this optimal scheme, the accuracy of the MAP adversary is given by \eqref{eq:gaussianscheme0} with $\alpha = { \frac{2\mu - \beta_0^*-\beta_1^*}{\sqrt{(\gamma^*)^2 + \sigma^2}}}$.
\end{Theorem}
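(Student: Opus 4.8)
\textit{Proof strategy.} The plan is to reduce the three-parameter problem over $(\beta_0,\beta_1,\gamma)$ to a one-dimensional question about the normalized mean-gap $\alpha=\frac{2\mu-\beta_0-\beta_1}{\sqrt{\gamma^2+\sigma^2}}$, and then show that the MAP adversary's accuracy is monotone in $\alpha$. First I would record the two facts already anticipated in the text preceding the theorem. Because $N\sim\mathcal N(0,1)$ is independent of $(X,Y)$ and $Y^2=Y$, we get $\mathbb{E}_{\hat X,X}\|\hat X-X\|_2^2=\mathbb{E}\big[((1-Y)\beta_0-Y\beta_1)^2\big]+\gamma^2=(1-\tilde p)\beta_0^2+\tilde p\beta_1^2+\gamma^2$, so the distortion constraint is exactly the one in \eqref{eq:gaussianscheme2opt}. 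Next, $\hat X\mid\{Y=0\}\sim\mathcal N(\beta_0-\mu,\ \sigma^2+\gamma^2)$ and $\hat X\mid\{Y=1\}\sim\mathcal N(\mu-\beta_1,\ \sigma^2+\gamma^2)$, so inferring $Y$ from $\hat X$ is a binary-Gaussian hypothesis test with common variance and priors $(1-\tilde p,\tilde p)$; the MAP rule is a single threshold on $\hat X$, and the standard detection-probability computation \cite{van2004detection} yields the accuracy $P^{\text{(G)}}_{\text{d}}$ of \eqref{eq:gaussianscheme0} with $\alpha=\frac{2\mu-\beta_0-\beta_1}{\sqrt{\gamma^2+\sigma^2}}$ (this is the form valid for $\alpha\ge 0$; the case $\alpha<0$ is handled below).

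The crux is a monotonicity lemma: $P^{\text{(G)}}_{\text{d}}$, as a function of $\alpha$, is strictly increasing on $(0,\infty)$. Write $c=\ln\frac{1-\tilde p}{\tilde p}$, $g_1(\alpha)=-\frac{\alpha}{2}+\frac{c}{\alpha}$, $g_2(\alpha)=-\frac{\alpha}{2}-\frac{c}{\alpha}$, and let $\phi$ be the standard normal density. Using $Q'=-\phi$, differentiation gives
\[
\frac{d}{d\alpha}P^{\text{(G)}}_{\text{d}}=\tilde p\,\phi(g_1)\Big(\tfrac12+\tfrac{c}{\alpha^2}\Big)+(1-\tilde p)\,\phi(g_2)\Big(\tfrac12-\tfrac{c}{\alpha^2}\Big).
\]
Since $g_2^2-g_1^2=2c$, one has $\phi(g_1)/\phi(g_2)=e^{(g_2^2-g_1^2)/2}=e^{c}=\frac{1-\tilde p}{\tilde p}$, hence $\tilde p\,\phi(g_1)=(1-\tilde p)\,\phi(g_2)=:K(\alpha)>0$; the $\pm c/\alpha^2$ terms therefore cancel and $\frac{d}{d\alpha}P^{\text{(G)}}_{\text{d}}=K(\alpha)>0$. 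I expect this cancellation (and checking it cleanly) to be the one genuinely nontrivial step; the rest is bookkeeping. It is also worth noting that $P^{\text{(G)}}_{\text{d}}(\alpha)\to\max\{\tilde p,1-\tilde p\}$ as $\alpha\downarrow 0$, the floor below which no MAP adversary can ever be pushed.

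Finally I would assemble the argument. Overshooting is never advantageous: if a feasible triple has $\beta_0+\beta_1>2\mu$, rescaling $(\beta_0,\beta_1)$ by $t=2\mu/(\beta_0+\beta_1)\in(0,1)$ strictly decreases $(1-\tilde p)\beta_0^2+\tilde p\beta_1^2$ (so feasibility is preserved) and makes the two conditional laws of $\hat X$ coincide, i.e. $\hat X\perp Y$, so the adversary attains exactly the floor $\max\{\tilde p,1-\tilde p\}$. Thus we may restrict to $\beta_0+\beta_1\le 2\mu$, equivalently $\alpha\ge 0$; on this region, monotonicity of $P^{\text{(G)}}_{\text{d}}$ in $\alpha$ converts the privatizer's problem \eqref{eq:generalopt} (with $d(\hat x,x)=\|\hat x-x\|_2^2$) into minimizing $\alpha=\frac{2\mu-\beta_0-\beta_1}{\sqrt{\gamma^2+\sigma^2}}$ subject to $(1-\tilde p)\beta_0^2+\tilde p\beta_1^2+\gamma^2\le D$ and $\beta_0,\beta_1,\gamma\ge 0$ — precisely \eqref{eq:gaussianscheme2opt}. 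Substituting the minimizer back into \eqref{eq:gaussianscheme0} with $\alpha=\frac{2\mu-\beta_0^*-\beta_1^*}{\sqrt{(\gamma^*)^2+\sigma^2}}$ gives the stated adversary accuracy. One remark to include: in contrast to Theorems \ref{thm:gaussianscheme0} and \ref{thm:gaussianscheme1}, the reduced program is a (generally non-convex) minimization of a ratio over a convex set, so it need not admit a closed form, which is why the theorem leaves the optimum implicit; it is nonetheless readily solved numerically.
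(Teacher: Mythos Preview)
Your proposal is correct and follows essentially the same approach as the paper: compute $\tfrac{d}{d\alpha}P^{\text{(G)}}_{\text{d}}$, observe the cancellation $\tilde p\,\phi(g_1)=(1-\tilde p)\,\phi(g_2)$ (exactly the identity used in the proof of Theorem~\ref{thm:gaussianscheme0}), conclude monotonicity, and reduce the privatizer's problem to minimizing $\alpha$ under the distortion constraint. Your treatment is in fact slightly more careful than the paper's short proof, since you explicitly dispose of the overshooting region $\beta_0+\beta_1>2\mu$ (where $\alpha<0$) by rescaling to hit the floor $\max\{\tilde p,1-\tilde p\}$, a point the paper leaves implicit.
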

\begin{proof}
	Similar to the proofs of Theorem \ref{thm:gaussianscheme0} and \ref{thm:gaussianscheme1}, we can compute the derivative of ${P^{\text{(G)}}_{\text{d}}}$ \textit{w.r.t.}~$\alpha$. It is easy to verify that ${P^{\text{(G)}}_{\text{d}}}$ is monotonically increasing with $\alpha$. Therefore, the optimal mechanism is given by the solution to \eqref{eq:gaussianscheme2opt}. Substituting the optimal parameters into \eqref{eq:gaussianscheme0} yields the MAP probability of inferring the private variable $Y$ from $\hat{X}$.
\end{proof}
\textit{Remark:} Note that the objective function in \eqref{eq:gaussianscheme2opt} only depends on $\beta_0+\beta_1$ and $\gamma$. We define $\beta=\beta_0+\beta_1$. Thus, the above objective function can be written as
\begin{align}
	\label{eq:gaussianscheme2beta}
	\min\limits_{\beta, \gamma}{ \frac{2\mu - \beta}{\sqrt{\gamma^2 + \sigma^2}}}.
\end{align}
It is straightforward to verify that the determinant of the Hessian of \eqref{eq:gaussianscheme2beta} is always non-positive. Therefore, the above optimization problem is non-convex in $\beta$ and $\gamma$.

Finally, we consider the PDD Gaussian noise adding privatization scheme given by $g(X,Y)= X + (1-Y)\beta_0 - Y\beta_1 + (1-Y)\gamma_0N + Y \gamma_1N$, where $N \sim \mathcal{N}(0,1)$. This PDD mechanism is the most general one in the Gaussian noise adding setting and includes the two previous mechanisms. The objective of the privatizer is to minimize the adversary's probability of correctly inferring $Y$ from $g(X,Y)$ subject to the distortion constraint given by $\tilde{p}((\beta_1)^2+(\gamma_1)^2)+(1-\tilde{p})((\beta_0)^2+(\gamma_0)^2)\le D$. As we have discussed in the remark after Theorem \ref{thm:gaussianscheme2}, the problem becomes non-convex even for the simpler case in which $\gamma_0=\gamma_1=\gamma$. In order to obtain the optimal parameters for this case, we first show that the optimal privacy mechanism lies on the boundary of the distortion constraint.
\begin{Proposition}
	\label{prop:gaussian}
	For the privacy mechanism given by $g(X,Y)= X + (1-Y)\beta_0 - Y\beta_1 + (1-Y)\gamma_0N + Y \gamma_1N$, the optimal parameters $\beta^*_0,\beta^*_1,\gamma^*_0,\gamma^*_1$ satisfy $\tilde{p}((\beta^*_1)^2+(\gamma^*_1)^2)+(1-\tilde{p})((\beta^*_0)^2+(\gamma^*_0)^2)=D$.
\end{Proposition}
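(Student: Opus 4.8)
The plan is to prove Proposition~\ref{prop:gaussian} by contradiction, using a stochastic-degradation (data-processing) argument rather than first-order conditions, since the remark after Theorem~\ref{thm:gaussianscheme2} shows the objective is non-convex and, when $\gamma_0\neq\gamma_1$, the MAP decision region need not be a single threshold, so the closed form \eqref{eq:gaussianscheme0} does not apply verbatim. First I would observe that the adversary sees only the joint law of $(Y,\hat X)$, where conditioned on $Y=0$ one has $\hat X\sim\mathcal{N}(-\mu+\beta_0,\,\sigma^2+\gamma_0^2)$ and conditioned on $Y=1$ one has $\hat X\sim\mathcal{N}(\mu-\beta_1,\,\sigma^2+\gamma_1^2)$, and that $P^{\text{(G)}}_{\text{d}}$ is exactly the Bayes detection probability of this binary test with priors $1-\tilde p,\tilde p$. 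The single structural fact I need is the monotonicity of Bayes accuracy under post-processing: if $Y\to Z\to W$ is a Markov chain, then any estimator of $Y$ from $W$ is also an estimator from $Z$, so the minimal error probability from $W$ is at least that from $Z$, and hence the MAP accuracy from $W$ is no larger than from $Z$.

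Now suppose, towards a contradiction, that some optimal mechanism $(\beta_0^*,\beta_1^*,\gamma_0^*,\gamma_1^*)$ has positive slack,
$$\varepsilon\ :=\ D-\Big(\tilde p\big((\beta_1^*)^2+(\gamma_1^*)^2\big)+(1-\tilde p)\big((\beta_0^*)^2+(\gamma_0^*)^2\big)\Big)\ >\ 0 .$$
Consider the mechanism with the same shifts $\beta_0^*,\beta_1^*$ but with the noise scales inflated by a common amount, $\tilde\gamma_0=\sqrt{(\gamma_0^*)^2+\varepsilon}$ and $\tilde\gamma_1=\sqrt{(\gamma_1^*)^2+\varepsilon}$; these are nonnegative, so the mechanism is admissible, and its expected distortion equals $\tilde p((\beta_1^*)^2+(\gamma_1^*)^2+\varepsilon)+(1-\tilde p)((\beta_0^*)^2+(\gamma_0^*)^2+\varepsilon)=D$, i.e.\ it lies on the boundary of the distortion constraint. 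Its output $\tilde X$ has, for each value of $Y$, the same conditional law as $\hat X^{*}+\sqrt{\varepsilon}\,N'$, where $\hat X^{*}$ is the optimal mechanism's output and $N'\sim\mathcal{N}(0,1)$ is independent of $(X,Y,N)$; crucially, because this extra noise does not depend on $Y$, the chain $Y\to\hat X^{*}\to\tilde X$ is Markov. By the monotonicity recalled above, the MAP adversary's accuracy against $\tilde X$ is at most its accuracy against $\hat X^{*}$. Since $\hat X^{*}$ was optimal, $\tilde X$ is optimal too, and it satisfies the distortion with equality; this proves that whenever the constraint is slack there is an optimal mechanism with the constraint active, so the optimum can always be taken on the boundary — which is exactly what the ensuing analysis (deriving the optimal parameters) needs.

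The step I expect to require the most care is pinning down the sense in which the inequality is strict, in order to match the literal statement that \emph{the} optimal parameters lie on the boundary. The data-processing bound above is only weak, so one must separately handle the degenerate case in which the two conditional Gaussians already coincide, i.e.\ $\beta_0^*+\beta_1^*=2\mu$ and $\gamma_0^*=\gamma_1^*$, which is precisely when $P^{\text{(G)}}_{\text{d}}$ has reached its floor $\max\{\tilde p,1-\tilde p\}$: there the inflated mechanism still has coinciding conditionals and stays at the floor, so it is again optimal and on the boundary. Off this lower-dimensional locus, the signed density $(1-\tilde p)f_{\hat X\mid Y=0}-\tilde p f_{\hat X\mid Y=1}$ changes sign, which makes the data-processing inequality strict, so no interior point can be optimal at all. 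The remaining items — nonnegativity and feasibility of $\tilde\gamma_0,\tilde\gamma_1$, and the claim that $\tilde X$ has the stated conditional laws — are routine computations with Gaussian convolutions.
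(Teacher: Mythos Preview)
Your proof is correct and takes a genuinely different route from the paper's. The paper also argues by contradiction from a slack constraint, but instead of inflating the noise scales it increases the shift $\beta_1$ by a constant $c>0$ until the distortion constraint binds, and then invokes the claim that the MAP adversary's accuracy is monotonically (weakly) decreasing in $\beta_1$. Your data-processing argument --- observing that $\tilde X=\hat X^{*}+\sqrt{\varepsilon}\,N'$ with $N'$ independent of $Y$ yields the Markov chain $Y\to\hat X^{*}\to\tilde X$, so no estimator from $\tilde X$ can outperform the best estimator from $\hat X^{*}$ --- is more robust: it holds regardless of the relative positions of the conditional means and variances, whereas the paper's monotonicity-in-$\beta_1$ claim tacitly requires that the conditional means have not yet crossed (once $\mu-\beta_1<-\mu+\beta_0$, increasing $\beta_1$ pushes the means further apart and the claimed monotonicity fails). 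Your treatment of the strictness issue is also more careful than the paper's, which reads a weak inequality as a ``contradiction''; as you correctly note, the conclusion that actually follows in both approaches is that an optimal mechanism can always be taken on the boundary, which is all the subsequent parametrization needs. The paper's route is shorter if one grants the monotonicity claim; yours requires no such side assertion and extends transparently to any additive-noise family.
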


\begin{proof}
	We prove the above statement by contradiction. Assume that the optimal parameters satisfy $\tilde{p}((\beta^*_1)^2+(\gamma^*_1)^2)+(1-\tilde{p})((\beta^*_0)^2+(\gamma^*_0)^2)<D$. Let $\tilde{\beta}_1=\beta^*_1+c$, where $c>0$ is chosen so that $\tilde{p}((\tilde{\beta}_1)^2+(\gamma^*_1)^2)+(1-\tilde{p})((\beta^*_0)^2+(\gamma^*_0)^2)=D$. Since the inference accuracy is monotonically decreasing with $\beta_1$, the resultant inference accuracy can only be lower for replacing $\beta^*_1$ with $\tilde{\beta}_1$. This contradicts with the assumption that $\tilde{p}((\beta^*_1)^2+(\gamma^*_1)^2)+(1-\tilde{p})((\beta^*_0)^2+(\gamma^*_0)^2)<D$. Using the same type of analysis, we can show that any parameter that deviates from $\tilde{p}((\beta^*_1)^2+(\gamma^*_1)^2)+(1-\tilde{p})((\beta^*_0)^2+(\gamma^*_0)^2)=D$ is suboptimal.
\end{proof}
Let $e^2_0=(\beta^*_0)^2+(\gamma^*_0)^2$ and $e^2_1=(\beta^*_1)^2+(\gamma^*_1)^2$.
Since the optimal parameters of the privatizer lie on the boundary of the distortion constraint, we have $\tilde{p}e^2_1+(1-\tilde{p})e^2_0=D$. This implies $(e_0, e_1)$ lies on the boundary of an ellipse parametrized by $\tilde{p}$ and $D$.  Thus, we have $e_1=\sqrt{\frac{D}{\tilde{p}}}\frac{1-\epsilon^2}{1+\epsilon^2}$ and $e_0=2\sqrt{\frac{D}{1-\tilde{p}}}\frac{\epsilon}{1+\epsilon^2}$, where $\epsilon\in[0,1]$. Therefore, the optimal parameters satisfy
\begin{align}
(\beta^*_0)^2+(\gamma^*_0)^2=\left[2\sqrt{\frac{D}{1-\tilde{p}}}\frac{\epsilon}{1+\epsilon^2}\right]^2, \quad (\beta^*_1)^2+(\gamma^*_1)^2=\left[\sqrt{\frac{D}{\tilde{p}}}\frac{1-\epsilon^2}{1+\epsilon^2}\right]^2.
\end{align}
This implies $(\beta^*_i, \gamma^*_i), i\in\{0,1\}$ lie on the boundary of two circles parametrized by $D, \tilde{p}$ and $\epsilon$. Thus, we can write $\beta^*_0,\beta^*_1,\gamma^*_0,\gamma^*_1$ as
\begin{align}
&\beta^*_0=2\sqrt{\frac{D}{1-\tilde{p}}}\frac{\epsilon}{1+\epsilon^2}\frac{1-w_0^2}{1+w_0^2}, \quad\beta^*_1=\sqrt{\frac{D}{\tilde{p}}}\frac{1-\epsilon^2}{1+\epsilon^2}\frac{1-w_1^2}{1+w_1^2},\\\nonumber
&\gamma^*_0=4\sqrt{\frac{D}{1-\tilde{p}}}\frac{\epsilon}{1+\epsilon^2}\frac{w_0}{1+w_0^2}, \quad \gamma^*_1=2\sqrt{\frac{D}{\tilde{p}}}\frac{1-\epsilon^2}{1+\epsilon^2}\frac{w_1}{1+w_1^2},
\end{align}
where $\epsilon, w_0, w_1 \in[0,1]$. The optimal parameters  $\beta^*_0,\beta^*_1,\gamma^*_0,\gamma^*_1$ can be computed by a grid search in the cube parametrized by $\epsilon, w_0, w_1 \in[0,1]$ that minimizes the accuracy of the MAP adversary. In the following section, we will use this general PDD Gaussian noise adding privatization scheme in our data-driven simulations and compare the performance of the privacy mechanisms obtained by both theoretical and data-driven approaches.

\subsection{Data-driven Approach for Binary Gaussian Mixture Model}
To illustrate our data-driven GAP approach, we assume the privatizer only has access to the dataset $\mathcal{D}$ but does not know the joint distribution of $(X,Y)$. Finding the optimal privacy mechanism becomes a learning problem. In the training phase, we use the empirical log-loss function $L_{\text{XE}}(h(g(X,Y;\theta_{p});\theta_{a}),Y)$ provided in~\eqref{eq:lossCEbinary} for the adversary. Thus, for a fixed privatizer parameter $\theta_{p}$, the adversary learns the optimal parameter $\theta^*_a$ that maximizes $-L_{\text{XE}}(h(g(X,Y;\theta_{p});\theta_{a}),Y)$. On the other hand, the optimal parameter for the privacy mechanism is obtained by solving \eqref{eq:learnedprivatizer}. After convergence, we use the learned data-driven GAP mechanism to compute the accuracy of inferring the private variable under a strong MAP adversary. We evaluate our data-driven approach by comparing the mechanisms learned in an adversarial fashion on $\mathcal{D}$ with the game-theoretically optimal ones in which both the adversary and privatizer are assumed to have access to $P(X,Y)$.

\begin{figure}
	\centering
	\includegraphics[width=0.8\textwidth]{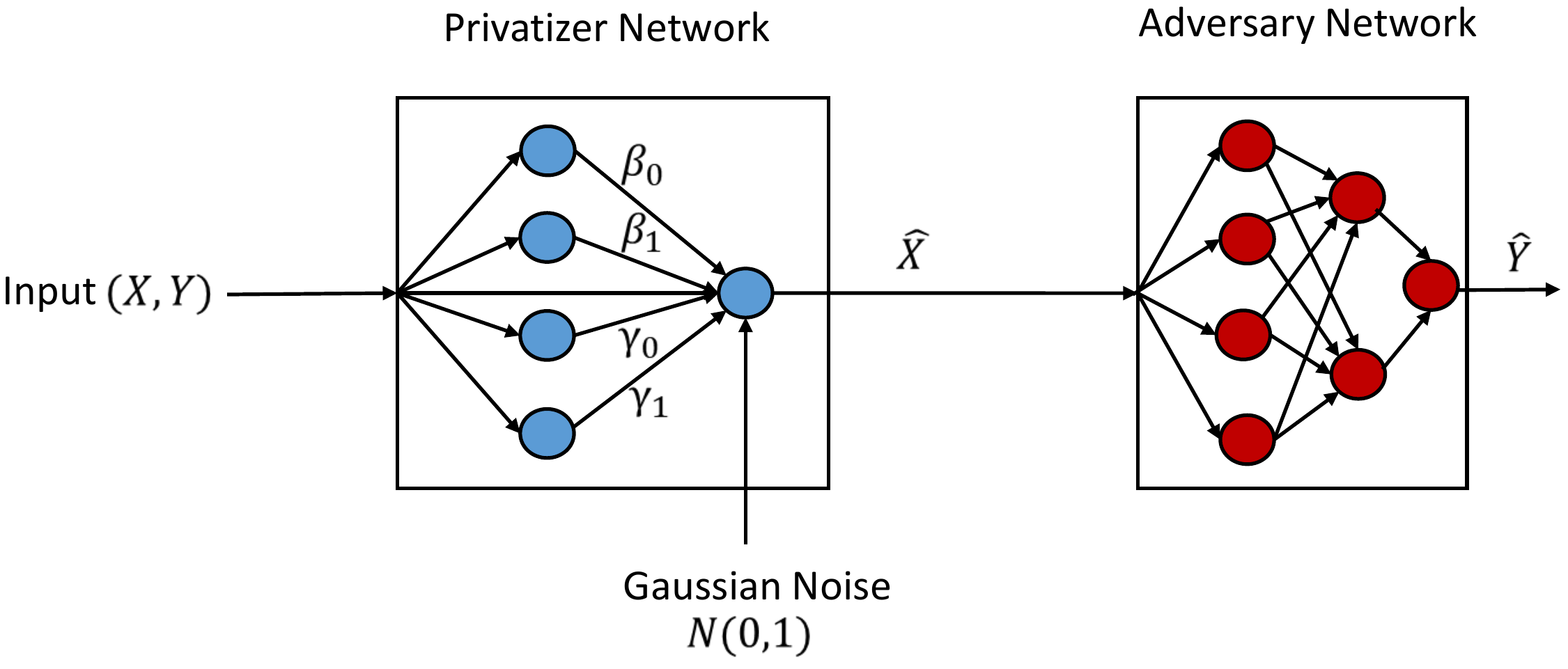}	
	\caption{Neural network structure of the privatizer and adversary for binary Gaussian mixture model}
	\label{fig:nngaussian}
\end{figure}

We consider the PDD Gaussian noise adding privacy mechanism given by $g(X,Y) =X + (1-Y)\beta_0  - Y\beta_1 + (1-Y)\gamma_0 N + Y\gamma_1N$. Similar to the binary setting, we use two neural networks to model the privatizer and the adversary. As shown in Figure \ref{fig:nngaussian}, the privatizer is modeled by a two-layer neural network with parameters $\beta_0, \beta_1, \gamma_0, \gamma_1 \in \mathbb{R}$. The adversary, whose goal is to infer $Y$ from privatized data $\hat{X}$, is modeled by a three-layer neural network classifier with leaky ReLU activations. The random noise is drawn from a standard Gaussian distribution $N \sim \mathcal{N}(0, 1)$.

In order to enforce the distortion constraint, we use the augmented Lagrangian method to penalize the learning objective when the constraint is not satisfied. In the binary Gaussian mixture model setting,  the augmented Lagrangian method uses two parameters, namely $\lambda_t$ and $\rho_t$ to approximate the constrained optimization problem by a series of unconstrained problems. Intuitively, a large value of $\rho_t$ enforces the distortion constraint to be binding, whereas $\lambda_t$ is an estimate of the Lagrangian multiplier. To obtain the optimal solution of the constrained optimization problem, we solve a series of unconstrained problems given by \eqref{eq:augmentedlagrange}.

\begin{table}[!hbpt]
	\centering
	\caption{Synthetic datasets}
	\label{tb:datasets}
	\begin{threeparttable}
		\begin{tabular}{cccc}
			\toprule
			Dataset & $P(Y=1)$  & $X|Y=0$ & $X|Y=1$  \\
			\midrule
			1&0.5& $\mathcal{N}(-3,1)$ & $\mathcal{N}(3,1)$  \\
			2&0.5& $\mathcal{N}(-3,4)$ & $\mathcal{N}(3,1)$\\
			3&0.75& $\mathcal{N}(-3,1)$ & $\mathcal{N}(3,1)$  \\
			4 &0.75 & $\mathcal{N}(-3,4)$ & $\mathcal{N}(3,1)$  \\
			\bottomrule
		\end{tabular}
	\end{threeparttable}
\end{table}
\subsection{Illustration of Results}
We use synthetic datasets to evaluate our proposed GAP framework. We consider four synthetic datasets shown in Table \ref{tb:datasets}. Each synthetic dataset used in this experiment contains $20,000$ training samples and $2,000$ test samples. We use Tensorflow to train both the privatizer and the adversary using Adam optimizer with a learning rate of $0.01$ and a minibatch size of $200$.

\begin{figure}[!hbpt]
	\centering
	\begin{subfigure}[t]{0.45\textwidth}
		\includegraphics[width=0.98\columnwidth]{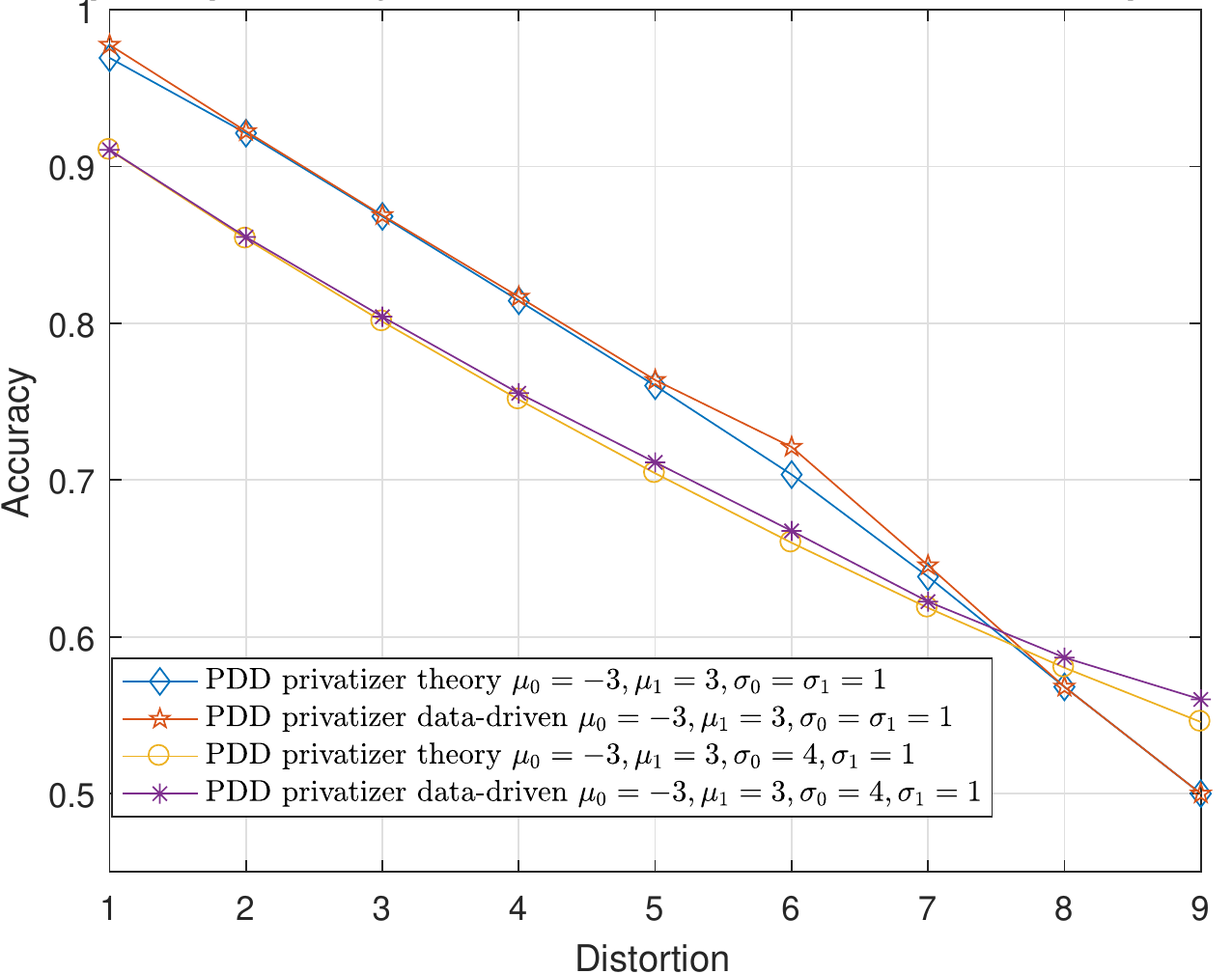}
		\caption{Performance of PDD mechanisms against MAP adversary for $\tilde{p}=0.5$}
		\label{fig:tl05gaussian}
	\end{subfigure} \qquad
	\begin{subfigure}[t]{0.45\textwidth}
		\includegraphics[width=0.98\columnwidth]{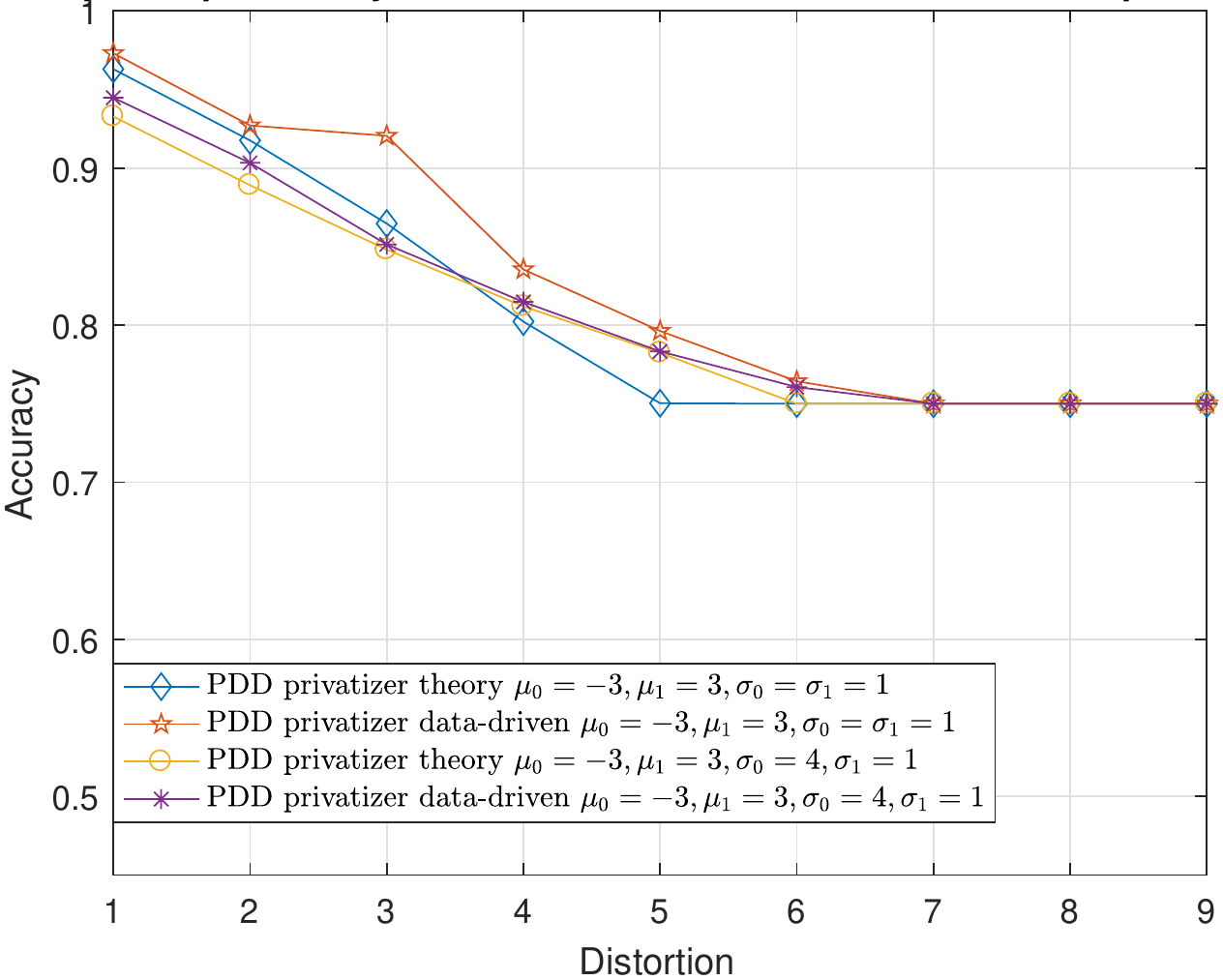}
		\caption{Performance of PDD mechanisms against MAP adversary for $\tilde{p}=0.75$}
		\label{fig:tl075gaussian}
	\end{subfigure}\\
	\caption{Privacy-distortion tradeoff for binary Gaussian mixture model}%
	\label{fig:gaussianperformancetl}
\end{figure}

Figure \ref{fig:tl05gaussian} and \ref{fig:tl075gaussian} illustrate the performance of the optimal PDD Gaussian noise adding mechanisms against the strong theoretical MAP adversary when $P(Y=1)=0.5$ and $P(Y=1)=0.75$, respectively. It can be seen that the optimal mechanisms obtained by both theoretical and data-driven approaches reduce the {inference accuracy} of the MAP adversary as the distortion increases. Similar to the binary data model, we observe that the accuracy of the adversary saturates when the distortion crosses some threshold. Moreover, it is worth pointing out that for the binary Gaussian mixture setting, we also observe that the privacy mechanism obtained through the data-driven approach performs very well when pitted against the MAP adversary (maximum accuracy difference around $6 \%$ compared with theoretical approach). In other words, for the binary Gaussian mixture model, the data-driven approach for GAP can generate privacy mechanisms that are comparable, in terms of performance, to the theoretical approach, which assumes the privatizer has access to the underlying distribution of the data.

Figures \ref{fig:gaussianstart} to \ref{fig:gaussianend} show the privatization schemes for different datasets. The intuition of this Gaussian noise adding mechanism is to shift distributions of $X|Y=0$ and $X|Y=1$ closer and scale the variances to preserve privacy. When $P(Y=0)=P(Y=1)$ and $\sigma_0=\sigma_1$, the privatizer shifts and scales the two distributions almost equally. Furthermore, the resultant $\hat{X}|Y=0$ and $\hat{X}|Y=1$ have very similar distributions. We also observe that if $P(Y=0)\neq P(Y=1)$, the public variable whose corresponding private variable has a lower prior probability gets shifted more. It is also worth mentioning that when $\sigma_0\neq \sigma_1$, the public variable with a lower variance gets scaled more.

The optimal privacy mechanisms obtained via the data-driven approach under different datasets are presented in Tables \ref{privacyPaper:table:gaussian:sim:t1} to \ref{privacyPaper:table:gaussian:sim:t4}. In each table, $D$ is the maximum allowable distortion. $\beta_0$, $\beta_1$, $\gamma_0$, and $\gamma_1$ are the parameters of the privatizer neural network. These learned parameters dictate the statistical model of the privatizer, which is used to sanitize the dataset. We use $acc$ to denote the {inference} accuracy of the adversary using a test dataset and $xent$ to denote the converged {cross-entropy} of the adversary. The column titled $distance$ represents the average distortion $\mathbb{E}_{\mathcal{D}}\lVert X-\hat{X}\rVert^2$ that results from sanitizing the test dataset via the learned privatization scheme. $P_{\text{detect}}$ is the MAP adversary's inference accuracy under the learned privatization scheme, assuming that the adversary: (a) has access to the joint distribution of $(X,Y)$, (b) has knowledge of the learned privatization scheme, and (c) can compute the MAP rule. $P_{\text{detect-theory}}$ is the ``lowest'' inference accuracy we get if the privatizer had access to the joint distribution of $(X,Y)$, and used this information to compute the parameters of the privatization scheme based on the approach provided at the end of Section \ref{sec:pddgaussian}.

\begin{figure}[!hbpt]
	\centering
	\includegraphics[width=0.5\textwidth]{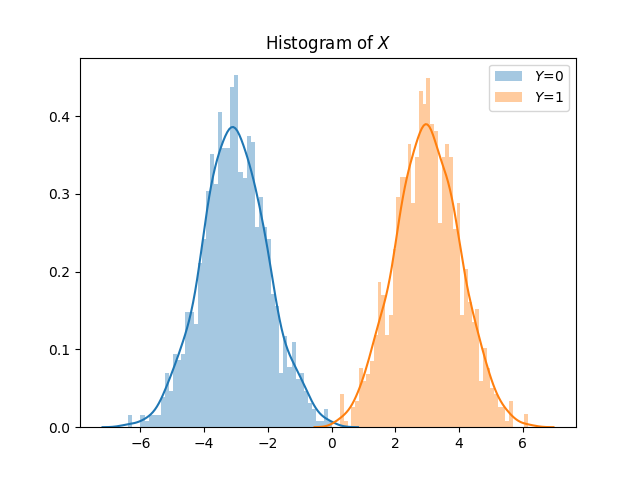}
	\caption{Raw test samples, equal variance}
	\label{fig:gaussianstart}
\end{figure}
\begin{figure}[!hbpt]
	\centering
	\begin{subfigure}[t]{0.32\textwidth}
		\includegraphics[width=0.98\columnwidth]{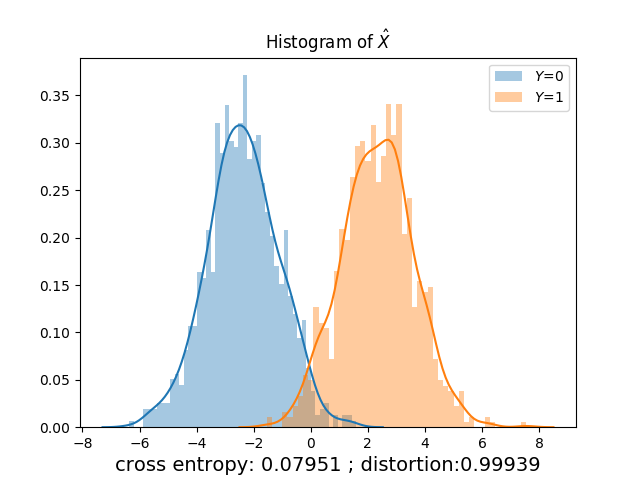}
		\caption{$D=1$}
	\end{subfigure}
	\begin{subfigure}[t]{0.32\textwidth}
		\includegraphics[width=0.98\columnwidth]{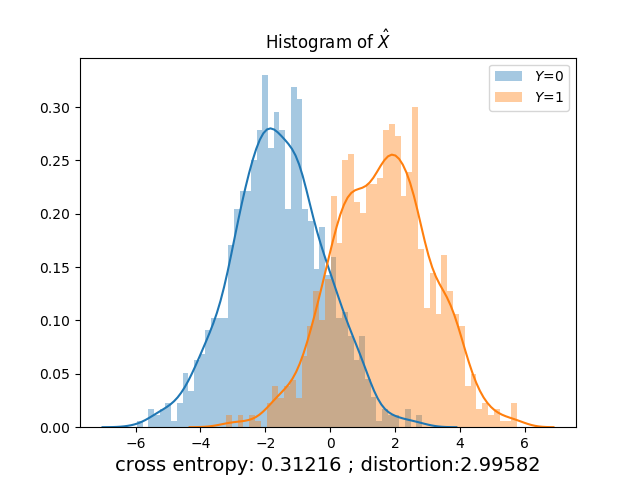}
		\caption{$D=3$}
	\end{subfigure}
	\begin{subfigure}[t]{0.32\textwidth}
		\includegraphics[width=0.98\columnwidth]{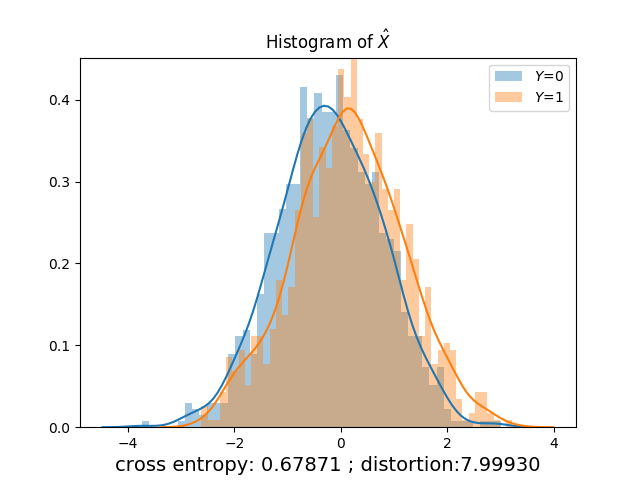}
		\caption{$D=8$}
	\end{subfigure}
	\caption{Prior $P(Y=1) = 0.5$, $X|Y=1 \sim N(3, 1)$, $X|Y=0 \sim N(-3, 1)$}
\end{figure}

\begin{figure}[!hbpt]
	\centering
	\begin{subfigure}[t]{0.32\textwidth}
		\includegraphics[width=0.98\columnwidth]{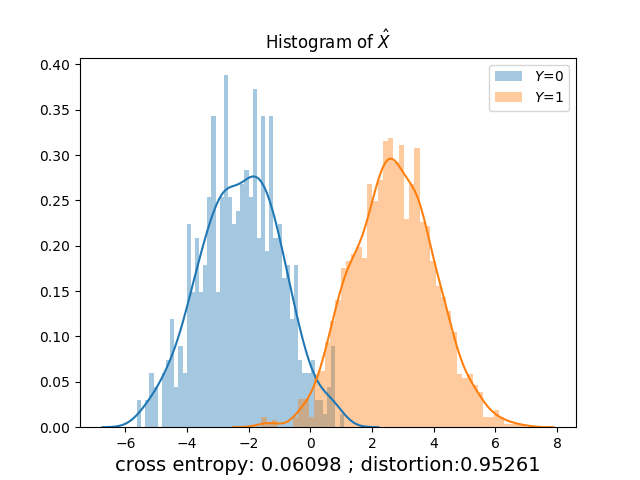}
		\caption{$D=1$}
	\end{subfigure}
	\begin{subfigure}[t]{0.32\textwidth}
		\includegraphics[width=0.98\columnwidth]{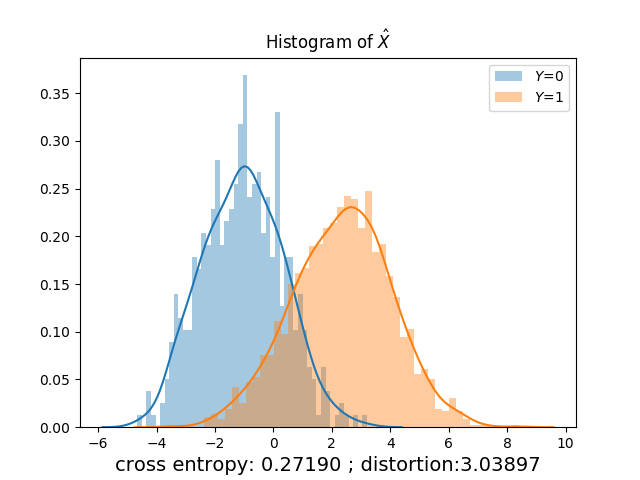}
		\caption{$D=3$}
	\end{subfigure}
	\begin{subfigure}[t]{0.32\textwidth}
		\includegraphics[width=0.98\columnwidth]{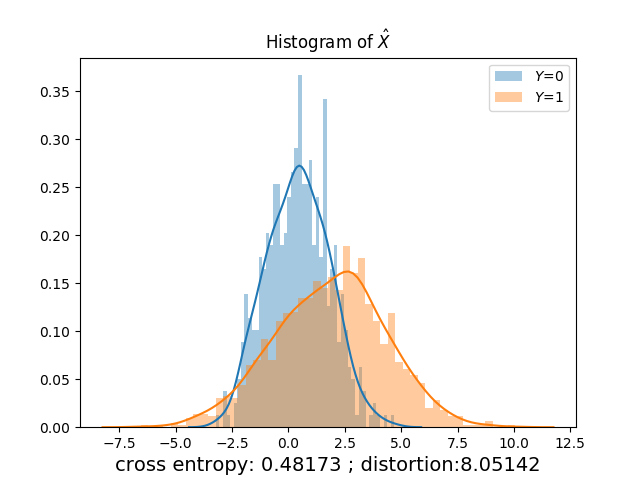}
		\caption{$D=8$}
	\end{subfigure}
	\caption{Prior $P(Y=1) = 0.75$, $X|Y=1 \sim N(3, 1)$, $X|Y=0 \sim N(-3, 1)$}
\end{figure}
%
%
\begin{figure}[!hbpt]
	\centering
	\includegraphics[width=0.5\textwidth]{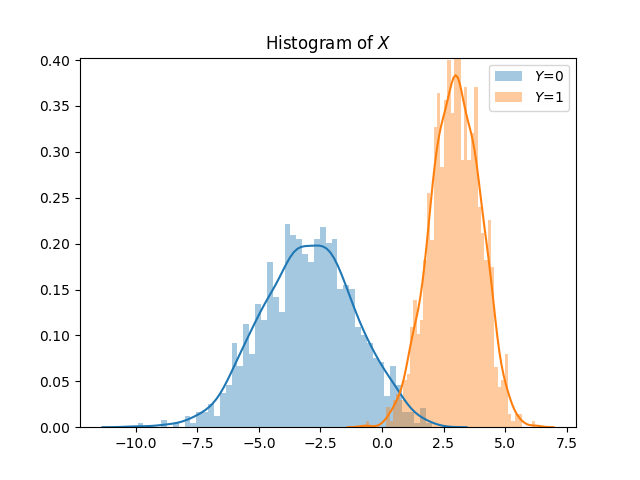}
	\caption{Raw test samples, unequal variance}
\end{figure}
%
\begin{figure}[!hbpt]
	\centering
	\begin{subfigure}[t]{0.32\textwidth}
		\includegraphics[width=0.98\columnwidth]{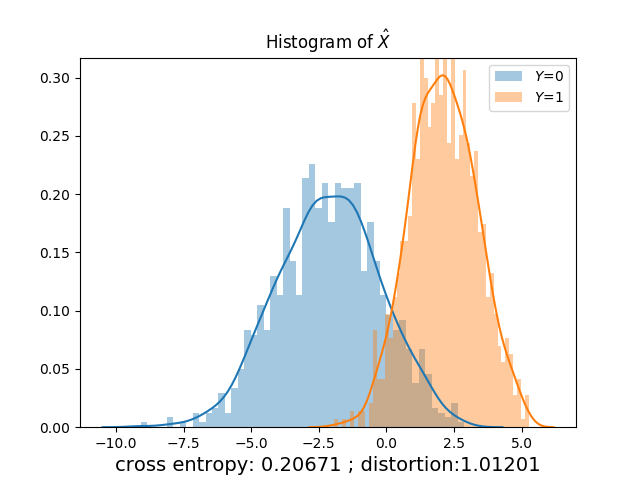}
		\caption{$D=1$}
	\end{subfigure}
	\begin{subfigure}[t]{0.32\textwidth}
		\includegraphics[width=0.98\columnwidth]{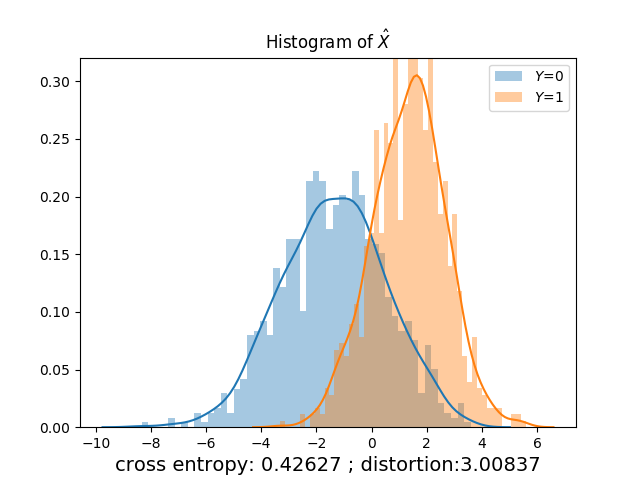}
		\caption{$D=3$}
	\end{subfigure}
	\begin{subfigure}[t]{0.32\textwidth}
		\includegraphics[width=0.98\columnwidth]{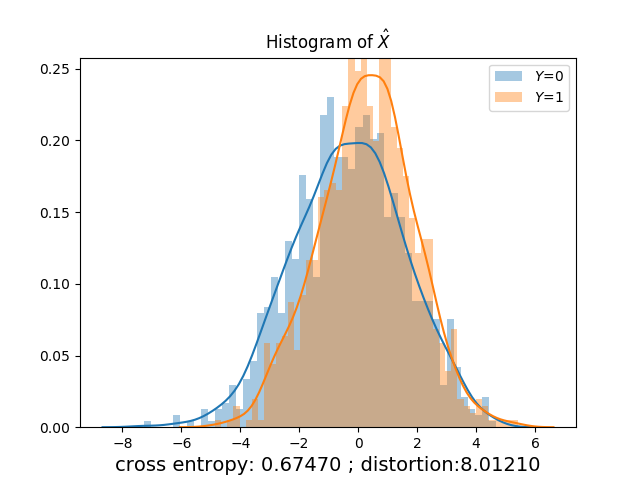}
		\caption{$D=8$}
	\end{subfigure}
	\caption{Prior $P(Y=1) = 0.5$, $X|Y=1 \sim N(3, 1)$, $X|Y=0 \sim N(-3, 4)$}
\end{figure}
%
\begin{figure}[!hbpt]
	\centering
	\begin{subfigure}[t]{0.32\textwidth}
		\includegraphics[width=0.98\columnwidth]{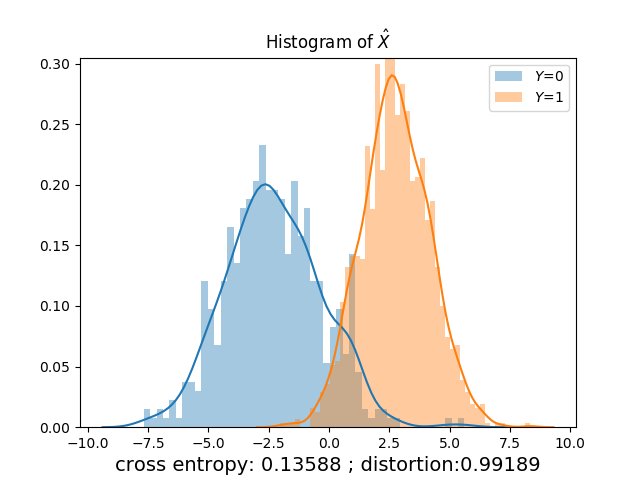}
		\caption{$D=1$}
	\end{subfigure}
	\begin{subfigure}[t]{0.32\textwidth}
		\includegraphics[width=0.98\columnwidth]{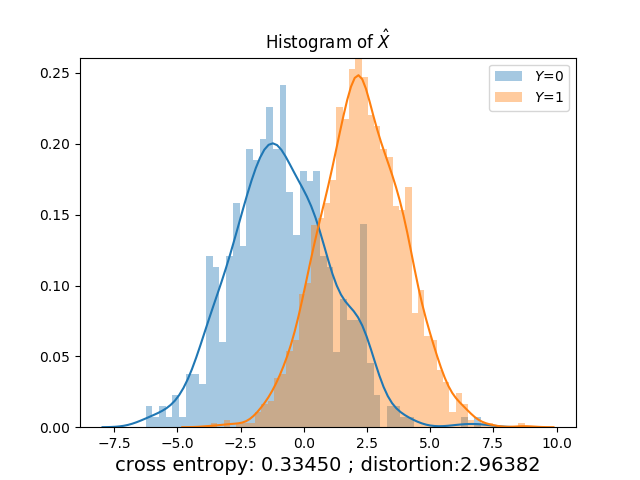}
		\caption{$D=3$}
	\end{subfigure}
	\begin{subfigure}[t]{0.32\textwidth}
		\includegraphics[width=0.98\columnwidth]{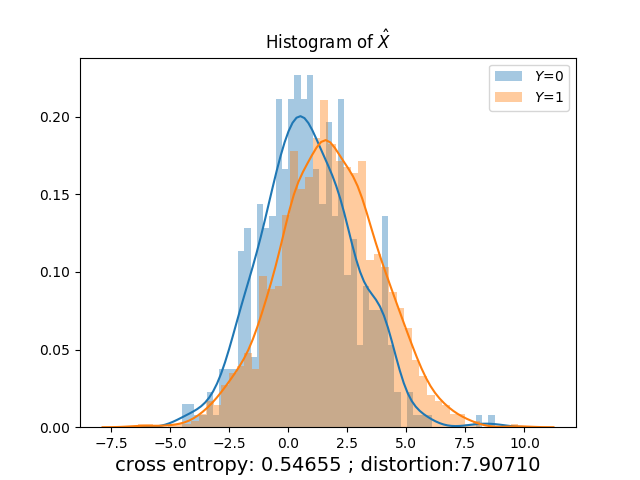}
		\caption{$D=8$}
	\end{subfigure}
	\caption{Prior $P(Y=1) = 0.75$, $X|Y=1 \sim N(3, 1)$, $X|Y=0 \sim N(-3, 4)$}
	\label{fig:gaussianend}
\end{figure}

\begin{table}[!hbpt]
	\centering
	\caption{Prior $P(Y=1) = 0.5$, $X|Y=1 \sim N(3, 1)$, $X|Y=0 \sim N(-3, 1)$}
	\label{privacyPaper:table:gaussian:sim:t1}
	\begin{threeparttable}
		\begin{tabular}{lrrrrrrrrr}
			\toprule
			$D$ & $\beta_0$ & $\beta_1$ & $\gamma_0$ & $\gamma_1$ & $acc$ & $xent$ & $distance$ & $P_{detect}$ & $P_{detect -theory}$ \\
			\midrule
			1&0.5214& 0.5214 & 0.7797 &0.7797 & 0.9742 & 0.0715 & 0.9776 & 0.9747 &0.9693 \\
			2&0.9861&0.9861&1.0028&1.0029&0.9169&0.1974&1.9909&0.9225 &0.9213 \\
			3&1.3819&1.3819&1.0405&1.0403&0.8633&0.3130&3.0013&0.8689 &0.8682 \\
			4&1.5713&1.5713&1.2249&1.2249&0.8123&0.4066&4.0136&0.8169 &0.8144 \\
			5&1.8199&1.8199&1.3026&1.3024&0.7545&0.4970&4.9894&0.7638 &0.7602 \\
			6&1.9743&1.9745&1.436&1.4359&0.7122&0.5564&5.9698&0.7211&0.7035 \\
			7&2.5332&2.5332&0.7499&0.7500&0.6391&0.6326&7.0149&0.6456 &0.6384\\
			8&2.8284&2.8284&0.0044&0.0028&0.5727&0.6787&7.9857&0.5681 &0.5681 \\
			9&2.9999&3.0000&0.0003&0.0004&0.4960&0.6938&8.9983&0.5000&0.5000 \\
			\bottomrule
		\end{tabular}
	\end{threeparttable}
\end{table}

\begin{table}[!hbpt]
	\centering
	\caption{Prior $P(Y=1) = 0.75$, $X|Y=1 \sim N(3, 1)$, $X|Y=0 \sim N(-3, 1)$}
	\label{privacyPaper:table:gaussian:sim:t2}
	\begin{threeparttable}
		\begin{tabular}{lrrrrrrrrr}
			\toprule
			$D$ & $\beta_0$ & $\beta_1$ & $\gamma_0$ & $\gamma_1$ & $acc$ & $xent$ & $distance$ & $P_{detect}$ & $P_{detect -theory}$ \\
			\midrule
			1&0.8094&0.2698&0.844&0.8963&0.9784&0.0591&0.9533&0.9731 & 0.9630 \\
			2&1.4998&0.5000&0.9676&1.1612&0.9314&0.1635&1.9098&0.9271 &0.9176\\
			3&0.9808&0.3269&1.3630&1.5762&0.911&0.2054&2.9833&0.9205&0.8647\\
			4&2.2611&0.7536&1.1327&1.6225&0.8359&0.3519&4.0559&0.8355&0.8023\\
			5&2.5102&0.8368&1.0724&1.8666&0.792&0.401&5.0445&0.7963&0.7503 \\
			6&2.8238&0.9412&1.2894&1.9752&0.7627&0.4559&6.0843&0.7643 &0.7500 \\
			7&3.2148&1.0718&0.6938&2.1403&0.7500&0.4468&7.0131&0.7500&0.7500\\
			8&3.3955&1.1320&1.0256&2.2789&0.7500&0.4799&8.0484&0.7500&0.7500\\
			9&4.1639&1.3878&0.0367&2.0714&0.7500&0.4745&8.9343&0.7500&0.7500 \\
			\bottomrule
		\end{tabular}
	\end{threeparttable}
\end{table}


\begin{table}[!hbpt]
	\centering
	\caption{Prior $P(Y=1) = 0.5$, $X|Y=1 \sim N(3, 1)$, $X|Y=0 \sim N(-3, 4)$}
	\label{privacyPaper:table:gaussian:sim:t3}
	\begin{threeparttable}
		\begin{tabular}{lrrrrrrrrr}
			\toprule
			$D$ & $\beta_0$ & $\beta_1$ & $\gamma_0$ & $\gamma_1$ & $acc$ & $xent$ & $distance$ & $P_{detect}$ & $P_{detect -theory}$ \\
			\midrule
			1&0.8660&0.8660&0.0079&0.7074&0.9122&0.2103&1.0078&0.9107&0.9105 \\
			2&1.2781&1.2781&0.0171&0.8560&0.8595&0.3239&2.0181&0.8550&0.8539 \\
			3&1.5146&1.5146&0.0278&1.1352&0.8084&0.4211&3.0264&0.8042&0.8011 \\
			4&1.7587&1.7587&0.0330&1.2857&0.7557&0.4970&4.0274&0.7554&0.7513 \\
			5&2.0923&2.0923&0.0142&1.0028&0.7057&0.5589&5.0082&0.7113&0.7043 \\
			6&2.3079&2.2572&0.0211&1.1185&0.6650&0.5999&6.0377&0.6676&0.6600 \\
			7&2.5351&2.5351&0.0567&1.0715&0.6100&0.6509&7.0125&0.6225&0.6185\\
			8&2.7056&2.7056&0.0358&1.1665&0.5770&0.6738&8.0088&0.5868&0.5803\\
			9&2.8682&2.8682&0.0564&1.2435&0.5445&0.6844&9.0427&0.5601&0.5457 \\
			\bottomrule
		\end{tabular}
	\end{threeparttable}
\end{table}

\begin{table} [H]
	\centering
	\caption{Prior $P(Y=1) = 0.75$, $X|Y=1 \sim N(3, 1)$, $X|Y=0 \sim N(-3, 4)$}
	\label{privacyPaper:table:gaussian:sim:t4}
	\begin{threeparttable}
		\begin{tabular}{lrrrrrrrrr}
			\toprule
			$D$ & $\beta_0$ & $\beta_1$ & $\gamma_0$ & $\gamma_1$ & $acc$ & $xent$ & $distance$ & $P_{detect}$ & $P_{detect -theory}$ \\
			\midrule
			1&0.8214&0.2739&0.0401&1.0167&0.9514&0.1357&0.9909&0.9448&0.9328 \\
			2&1.4164&0.4722&0.0583&1.2959&0.9026&0.2402&2.0257&0.9033&0.8891\\
			3&2.2354&0.7450&0.0246&1.3335&0.8665&0.3354&2.9617&0.8514&0.8481 \\
			4&2.6076&0.8693&0.0346&1.5199&0.8269&0.4034&3.9522&0.8148&0.8120 \\
			5&2.9919&0.9977&0.0143&1.6399&0.7885&0.4625&5.0034&0.7833&0.7824 \\
			6&3.3079&1.1027&0.0094&1.7707&0.7616&0.5013&6.0022&0.7606&0.7500 \\
			7&3.1458&1.0488&0.0565&2.1606&0.7496&0.4974&7.0091&0.7500&0.7500 \\
			8&3.9707&1.3237&0.0142&1.9129&0.7500&0.5470&7.9049&0.7500&0.7500 \\
			9&4.0835&1.3613&0.0625&2.1364&0.7500&0.5489&8.8932&0.7500&0.7500\\
			\bottomrule
		\end{tabular}
	\end{threeparttable}
\end{table}

\section{Concluding Remarks}
\label{sec:conclusion}
We have presented a unified framework for context-aware privacy called generative adversarial privacy (GAP). GAP allows the data holder to learn the privatization mechanism directly from the dataset (to be published) without requiring access to the dataset statistics. Under GAP, finding the optimal privacy mechanism is formulated as a game between two players: a privatizer and an adversary. An iterative minimax algorithm is proposed to obtain the optimal mechanism under the GAP framework.

To evaluate the performance of the proposed GAP model, we have focused on two types of datasets: (i) binary data model; and (ii) binary Gaussian mixture model. For both cases, the optimal GAP mechanisms are learned using an empirical $\log$-loss function. For each type of dataset, both private-data dependent and private-data independent mechanisms are studied. These results are cross-validated against the privacy guarantees obtained by computing the game-theoretically optimal mechanism under a strong MAP adversary. In the MAP adversary setting, we have shown that for the binary data model, the optimal GAP mechanism is obtained by solving a linear program. For the binary Gaussian mixture model, the optimal additive Gaussian noise privatization scheme is determined. Simulations with synthetic datasets for both types (i) and (ii) show that the privacy mechanisms learned via the GAP framework perform as well as the mechanisms obtained from theoretical computation.

Binary and Gaussian models are canonical models with a wide range of applications. However, moving next, we would like to consider more sophisticated dataset models that can capture real life signals (such as time series data and images). The generative models we have considered in this paper were tailored to the statistics of the datasets. In the future, we would like to experiment with the idea of using a deep generative model to automatically generate the sanitized data. Another straightforward extension to our work is to use the GAP framework to obtain data-driven mutual information privacy mechanisms. Finally, it would be interesting to investigate adversarial {loss} functions that allow us to move from weak to strong adversaries.

%

\appendix
\section{Proof of Theorem~\ref{thm:binary}}
\label{binaryproof}
\begin{proof}
If $q=\frac{1}{2}$, $X$ is independent of $Y$. The optimal solution is given by any $(s_0, s_1)$ that satisfies the distortion constraint ($\{s_0,s_1| ps_1+(1-p)s_0\ge 1-D, s_0, s_1\in[0,1]\}$) since $X$ and $Y$ are already independent. If $q\neq\frac{1}{2}$, since each maximum in \eqref{eq:optprivatizer} can only be one of the two values (i.e., the {inference accuracy} of guessing $\hat{Y}=0$ or $\hat{Y}=1$), the objective function of the privatizer is determined by the relationship between $P(Y=1, \hat{X}=i)$ and $P(Y=0, \hat{X}=i), i\in\{0,1\}$. Therefore, the optimization problem in \eqref{eq:optprivatizer} can be decomposed into the following four subproblems:
	\vspace {6pt}
	
\noindent	\textit{\textbf{Subproblem 1}}: $P(Y=1,\hat{X}=0)\ge P(Y=0,\hat{X}=0)$ and $P(Y=1,\hat{X}=1)\le P(Y=0,\hat{X}=1)$, which~implies $p(1-2q)(1-s_1)-(1-p)(1-2q)s_0\ge 0$ and $ (1-p)(1-2q)(1-s_0)-p(1-2q)s_1\ge 0$. As a result, the objective of the privatizer is given by $P(Y=1,\hat{X}=0)+P(Y=0,\hat{X}=1)$.  Thus, the~optimization problem in \eqref{eq:optprivatizer} can be written as
	\begin{equation}
\begin{array}{llllll}
	\label{eq:toycase1originalobjectives0s1}
	\min\limits_{s_0,s_1}\quad& (2q-1)[ps_1+(1-p)s_0]+1-q \\
	s.t.\quad & 0\le s_0\le 1 \\
	& 0\le s_1 \le 1\\
	& p(1-2q)s_1+(1-p)(1-2q)s_0\le p(1-2q)\\
	& p(1-2q)s_1+(1-p)(1-2q)s_0\le (1-p)(1-2q)\\
	& -ps_1-(1-p)s_0\le D-1.
	\end{array}
\end{equation}
	
	\begin{itemize}
		\item If $1-2q>0$, i.e., $q<\frac{1}{2}$, we have $ps_1+(1-p)s_0\le p$ and $ps_1+(1-p)s_0\le 1-p$. The~privatizer must maximize $ps_1+(1-p)s_0$ to reduce the adversary's probability of correctly inferring the private variable. Thus, if $1-D\le\min\{p,1-p\}$, the optimal value is given by $(2q-1)\min\{p,1-p\}+1-q$; the corresponding optimal solution is given by $\{s_0,s_1|ps_1+(1-p)s_0=\min\{p,1-p\}, 0\le s_0, s_1\le 1\}$. Otherwise, the problem is infeasible. 	
		\item {If $1-2q<0$, i.e., $q>\frac{1}{2}$, we have $ps_1+(1-p)s_0\ge p$ and $ps_1+(1-p)s_0\ge 1-p$. In this case, the privatizer has to minimize $ps_1+(1-p)s_0$. Thus, if $1-D\ge\max\{p,1-p\}$, the optimal value is given by $(2q-1)(1-D)+1-q$; the corresponding optimal solution is $\{s_0,s_1|  ps_1+(1-p)s_0=1-D, 0\le s_0, s_1\le 1\}$. Otherwise, the optimal value is $(2q-1)\max\{p,1-p\}+1-q$ and the corresponding optimal solution is given by $\{s_0,s_1| ps_1+(1-p)s_0=\max\{p,1-p\}, 0\le s_0, s_1\le 1 \}$.}
	\end{itemize}
\vspace {6pt}
	
\noindent	\textit{\textbf{Subproblem 2}}: $P(Y=1,\hat{X}=0)\le P(Y=0,\hat{X}=0)$ and $P(Y=1,\hat{X}=1)\ge P(Y=0,\hat{X}=1)$, which implies $p(1-2q)(1-s_1)-(1-p)(1-2q)s_0\le 0$ and $(1-p)(1-2q)(1-s_0)-p(1-2q)s_1\le 0$. Thus, the objective of the privatizer is given by $P(Y=0,\hat{X}=0)+P(Y=1,\hat{X}=1)$.  Therefore, the optimization problem in \eqref{eq:optprivatizer} can be written as
	\begin{equation}
\begin{array}{llllll}
	\label{eq:toycase2originalobjectives0s1}
	\min\limits_{s_0,s_1}\quad& (1-2q)[ps_1+(1-p)s_0]+q\\
	s.t.\quad & 0\le s_0\le 1\\
	& 0\le s_1 \le 1\\
	& -p(1-2q)s_1-(1-p)(1-2q)s_0\le -p(1-2q)\\
	& -p(1-2q)s_1-(1-p)(1-2q)s_0\le -(1-p)(1-2q)\\
	& -ps_1-(1-p)s_0\le D-1.
	\end{array}
\end{equation}
	\begin{itemize}
		\item If $1-2q>0$, i.e., $q<\frac{1}{2}$, we have $ps_1+(1-p)s_0\ge p$ and $ps_1+(1-p)s_0\ge 1-p$. The privatizer needs to minimize $ps_1+(1-p)s_0$ to reduce the adversary's probability of correctly inferring the private variable. Thus, if $1-D\ge\max\{p,1-p\}$, the optimal value is given by $(1-2q)(1-D)+q$; the corresponding optimal solution is $\{s_0,s_1|  ps_1+(1-p)s_0=1-D, 0\le s_0,s_1\le 1\}$. Otherwise, the optimal value is $(1-2q)\max\{p,1-p\}+q$ and the corresponding optimal solution is given by $\{s_0,s_1| ps_1+(1-p)s_0=\max\{p,1-p\}, 0\le s_0, s_1\le 1 \}$.
		\item If $1-2q<0$, i.e., $q>\frac{1}{2}$, we have $ps_1+(1-p)s_0\le p$ and $ps_1+(1-p)s_0\le 1-p$. In this case, the privatizer needs to maximize $ps_1+(1-p)s_0$. Thus, if $1-D\le\min\{p,1-p\}$, the optimal value is given by $(1-2q)\min\{p,1-p\}+q$; the corresponding optimal solution is given by $\{s_0,s_1|ps_1+(1-p)s_0=\min\{p,1-p\}, 0\le s_0,s_1\le 1\}$. Otherwise, the problem is infeasible. 	
	\end{itemize}	
		\vspace {6pt}
	
\noindent		\textit{\textbf{Subproblem 3}}: $P(Y=1,\hat{X}=0)\ge P(Y=0,\hat{X}=0)$ and $P(Y=1,\hat{X}=1)\ge P(Y=0,\hat{X}=1)$, we have $ p(1-2q)(1-s_1)-(1-p)(1-2q)s_0\ge 0$ and $(1-p)(1-2q)(1-s_0)-p(1-2q)s_1\le 0$. Under this scenario, the objective function  in \eqref{eq:optprivatizer} is given by $P(Y=1,\hat{X}=0)+P(Y=1,\hat{X}=1)$. Thus,~the~privatizer solves
	\begin{equation}
\begin{array}{llllll}
	\label{eq:toycase3originalobjectives0s1}
	\min\limits_{s_0, s_1}\quad & p(1-q)+(1-p)q\\
	s.t.\quad & 0\le s_0\le 1\\
	& 0\le s_1 \le 1\\
	& p(1-2q)s_1+(1-p)(1-2q)s_0\le p(1-2q)\\
	& -p(1-2q)s_1- (1-p)(1-2q)s_0\le -(1-p)(1-2q)\\
	& -ps_1-(1-p)s_0\le D-1.
	\end{array}
\end{equation}
	\begin{itemize}
		\item If $1-2q>0$, i.e., $q<\frac{1}{2}$, the problem becomes infeasible for $p<\frac{1}{2}$. For $p\ge\frac{1}{2}$, if~$1-D>\max\{p,1-p\}$, the problem is also infeasible; if $\min\{p,1-p\}\le1-D\le \max\{p,1-p\}$, the~optimal value is given by $p(1-q)+(1-p)q$ and the corresponding optimal solution is $\{s_0,s_1| 1-D \le ps_1+(1-p)s_0\le\max\{p,1-p\}, 0\le s_0, s_1\le 1\}$; otherwise, the optimal value is $p(1-q)+(1-p)q$ and the corresponding optimal solution is given by $\{s_0,s_1| \min\{p,1-p\}\le ps_1+(1-p)s_0\le\max\{p,1-p\} , 0\le s_0,s_1\le 1 \}$.
		\item If $1-2q<0$, i.e., $q>\frac{1}{2}$, the problem is infeasible for $p>\frac{1}{2}$. For $p\le\frac{1}{2}$, if $1-D>\max\{p,1-p\}$, the problem is also infeasible; if $\min\{p,1-p\}\le1-D\le \max\{p,1-p\}$, the optimal value is given by $p(1-q)+(1-p)q$ and the corresponding optimal solution is $\{s_0,s_1|  1-D\le ps_1+(1-p)s_0\le\max\{p,1-p\}, 0\le s_0,s_1\le 1\}$; otherwise, the optimal value is $p(1-q)+(1-p)q$ and the corresponding optimal solution is given by $\{s_0,s_1| \min\{p,1-p\}\le ps_1+(1-p)s_0\le\max\{p,1-p\} , 0\le s_0, s_1\le 1\}$. 	
	\end{itemize}	
			\vspace {6pt}
	
\noindent	 	\textit{\textbf{Subproblem 4}}: $P(Y=1,\hat{X}=0)\le P(Y=0,\hat{X}=0)$ and $P(Y=1,\hat{X}=1)\le P(Y=0,\hat{X}=1)$, which implies $ p(1-2q)(1-s_1)-(1-p)(1-2q)s_0\le 0$ and $(1-p)(1-2q)(1-s_0)-p(1-2q)s_1\ge 0$. Thus, the optimization problem in \eqref{eq:optprivatizer} is given by
		\begin{equation}
\begin{array}{llllll}
	\label{eq:toycase4originalobjectives0s1}
	\min\limits_{s_0, s_1}\quad& pq+(1-p)(1-q)\\
	s.t.\quad & 0\le s_0\le 1\\
	& 0\le s_1 \le 1\\
	& -p(1-2q)s_1-(1-p)(1-2q)s_0\le -p(1-2q)\\
	& p(1-2q)s_1+(1-p)(1-2q)s_0\le (1-p)(1-2q) \\
	& -ps_1-(1-p)s_0\le D-1.
	\end{array}
\end{equation}
	\begin{itemize}
		\item If $1-2q>0$, i.e., $q<\frac{1}{2}$, the problem becomes infeasible for $p>\frac{1}{2}$. For $p\le\frac{1}{2}$, if~$1-D>\max\{p,1-p\}$, the problem is also infeasible; if $\min\{p,1-p\}\le1-D\le \max\{p,1-p\}$, the~optimal value is given by $pq+(1-p)(1-q)$ and the corresponding optimal solution is $\{s_0,s_1| 1-D\le ps_1+(1-p)s_0\le\max\{p,1-p\}, 0\le s_0,s_1\le 1\}$; otherwise, the optimal value is $pq+(1-p)(1-q)$ and the corresponding optimal solution is given by $\{s_0,s_1| \min\{p,1-p\}\le ps_1+(1-p)s_0\le\max\{p,1-p\} , 0\le s_0,s_1\le 1 \}$.
		\item If $1-2q<0$, i.e., $q>\frac{1}{2}$, the problem becomes infeasible for $p<\frac{1}{2}$. For $p\ge\frac{1}{2}$, if~$1-D>\max\{p,1-p\}$, the problem is also infeasible; if $\min\{p,1-p\}\le1-D\le \max\{p,1-p\}$, the~optimal value is given by $pq+(1-p)(1-q)$ and the corresponding optimal solution is $\{s_0,s_1| 1-D\le ps_1+(1-p)s_0\le\max\{p,1-p\}, 0\le s_0,s_1\le 1\}$; otherwise, the optimal value is $pq+(1-p)(1-q)$ and the corresponding optimal solution is given by $\{s_0,s_1| \min\{p,1-p\}\le ps_1+(1-p)s_0\le\max\{p,1-p\} , 0\le s_0,s_1\le 1 \}$. 	
	\end{itemize}	 	 	
	
	Summarizing the analysis above yields Theorem  \ref{thm:binary}.
\end{proof}

\section{Proof of Theorem~\ref{thm:gaussianscheme0}}
\label{gaussianscheme0proof}
\begin{proof}
	Let us consider $\hat{X} = X + \beta+\gamma N$, where $\beta\in\mathbb{R}$ and $\gamma
	\ge 0$. Given the MAP adversary's {optimal inference accuracy} in \eqref{eq:gaussianscheme0}, the objective of the privatizer is to
	\begin{align}
		\label{eq:gaussianscheme0opt}
		\min\limits_{\beta,\gamma} & \quad {P^{\text{(G)}}_{\text{d}}}\\\nonumber
		s.t. & \quad\beta^2 + \gamma^2 \leq D\\\nonumber
		& \quad\gamma\ge 0.
	\end{align}
	Define $\frac{1-\tilde{p}}{\tilde{p}}=\eta$.
	The gradient of ${P^{\text{(G)}}_{\text{d}}}$ \textit{w.r.t.} $\alpha$ is given by
	\begin{align}
		\frac{\partial {P^{\text{(G)}}_{\text{d}}}}{\partial \alpha}=& \tilde{p}\left(-\frac{1}{\sqrt{2\pi}}e^{-\frac{\left(-  {\frac{\alpha}{2} }  + {\frac{1}{\alpha}} \ln\eta \right) ^2}{2}}\right)\left(-  { \frac{1}{2} }  - {\frac{1}{{\alpha}^2} } \ln\eta \right)
		\\\nonumber &
		+(1-\tilde{p})\left(-\frac{1}{\sqrt{2\pi}}e^{-\frac{\left(-  { \frac{\alpha}{2} }  - { \frac{1}{\alpha} } \ln\eta \right) ^2}{2}}\right)\left(-  { \frac{1}{2} }  + { \frac{1}{{\alpha}^2} } \ln\eta \right)\\\label{eq:gaussianscheme0der}
		=& \frac{1}{2\sqrt{2\pi}}\left(\tilde{p}e^{-\frac{\left(-  { \frac{\alpha}{2} }  + { \frac{1}{\alpha} } \ln\eta \right) ^2}{2}}+(1-\tilde{p})e^{-\frac{\left(-  { \frac{\alpha}{2} }  - { \frac{1}{\alpha} } \ln\eta \right) ^2}{2}}\right)\\\nonumber & +\frac{\ln\eta}{\alpha^2\sqrt{2\pi}}\left(\tilde{p}e^{-\frac{\left(-  { \frac{\alpha}{2} }  + { \frac{1}{\alpha}  } \ln\eta \right) ^2}{2}}-(1-\tilde{p})e^{-\frac{\left(-  { \frac{\alpha}{2} }  - { \frac{1}{\alpha} } \ln\eta \right) ^2}{2}}\right).
	\end{align}
	Note that
	\begin{align}
		\frac{\tilde{p}e^{-\frac{\left(-  { \frac{\alpha}{2} }  + { \frac{1}{\alpha} } \ln\eta \right) ^2}{2}}}{(1-\tilde{p})e^{-\frac{\left(-  { \frac{\alpha}{2} }  - { \frac{1}{\alpha} } \ln\eta \right) ^2}{2}}}=\frac{\tilde{p}}{1-\tilde{p}}e^{\frac{\left(-  { \frac{\alpha}{2} }  - { \frac{1}{\alpha} } \ln\eta \right) ^2-\left(-  { \frac{\alpha}{2} }  + { \frac{1}{\alpha} } \ln\eta \right) ^2}{2}}=\frac{\tilde{p}}{1-\tilde{p}}e^{\frac{2\ln\eta}{2}}=\frac{\tilde{p}}{1-\tilde{p}}e^{\ln\eta}=1.
	\end{align}
	Therefore, the second term in \eqref{eq:gaussianscheme0der} is $0$. Furthermore, the first term in \eqref{eq:gaussianscheme0der} is always positive. Thus, ${P^{\text{(G)}}_{\text{d}}}$ is monotonically increasing in $\alpha$. As a result, the optimization problem in \eqref{eq:gaussianscheme0opt} is equivalent to
	\begin{align}
		\label{eq:gaussianscheme0opteq}
		\max\limits_{\beta,\gamma} & \quad\sqrt{\gamma^2 + \sigma^2}\\\nonumber
		s.t. & \quad\beta^2+\gamma^2 \leq D\\\nonumber
		& \quad\gamma\ge 0.
	\end{align}
	Therefore, the optimal solution is given by $\beta^*=0$ and $\gamma^*=\sqrt{D}$. Substituting the optimal solution back into \eqref{eq:gaussianscheme0} yields the MAP probability of correctly inferring the private variable $Y$ from $\hat{X}$.
\end{proof}

\section{Proof of Theorem~\ref{thm:gaussianscheme1}}
\label{gaussianscheme1proof}
\begin{proof}
	Let us consider $\hat{X} = X + (1- Y)\beta_0 - Y\beta_1$, where $\beta_0$ and $\beta_1$ are both non-negative. Given the MAP adversary's {optimal inference accuracy} ${P^{\text{(G)}}_{\text{d}}}$, the objective of the privatizer is to
	\begin{align}
	\label{eq:gaussianscheme1opt}
	\min\limits_{\beta_0,\beta_1} & \quad {P^{\text{(G)}}_{\text{d}}}\\\nonumber
	s.t. & \quad(1-\tilde{p})\beta_0^2 + \tilde{p}\beta_1^2 \leq D\\\nonumber
	& \quad\beta_0,\beta_1\ge 0.
	\end{align} Recall that ${P^{\text{(G)}}_{\text{d}}}$ is monotonically increasing in $\alpha = { \frac{2\mu  -(\beta_1 + \beta_0)}{\sigma} }$. As a result, the optimization problem in \eqref{eq:gaussianscheme1opt} is equivalent to
	\begin{align}
	\label{eq:gaussianscheme1opteq}
	\max\limits_{\beta_0,\beta_1} & \quad\beta_1 + \beta_0\\\nonumber
	s.t. & \quad(1-\tilde{p})\beta_0^2 + \tilde{p}\beta_1^2 \leq D\\\nonumber
	& \quad\beta_0,\beta_1\ge 0.
	\end{align}
	Note that the above optimization problem is convex. Therefore, using the KKT conditions, we obtain the optimal solution
	\begin{align}
	\beta^*_0=\sqrt{\frac{\tilde{p}D}{1-\tilde{p}}}, \quad \beta^*_1=\sqrt{\frac{(1-\tilde{p})D}{\tilde{p}}}.
	\end{align}
	Substituting the above optimal solution into ${P^{\text{(G)}}_{\text{d}}}$ yields the MAP probability of {correctly inferring} the private variable $Y$ from $\hat{X}$.
\end{proof}

\bibliographystyle{plainnat}
\bibliography{sample,references,LS_Privacy_Refs,StatLearning_Bibliography}



\end{document}